\renewcommand{\cite}{\citep}
\newcommand\Mark[1]{\textsuperscript#1}
\newtheorem{theorem}{Theorem}
\newtheorem{lemma}{Lemma}
\begin{document} 

\runningtitle{Bayesian Nonparametric Poisson Process Allocation for Time-Sequence Modeling}
\runningauthor{Hongyi Ding, Mohammad Emtiyaz Khan, Issei Sato, Masashi Sugiyama}

\twocolumn[

\aistatstitle{Bayesian Nonparametric Poisson-Process Allocation for Time-Sequence Modeling}

\aistatsauthor{
	Hongyi Ding\Mark{*} \\
	\And\quad
	Mohammad Emtiyaz Khan\Mark{\textdied}\\
	\And\quad
	Issei Sato\Mark{*}\Mark{\textdied}\\
	\And
	Masashi Sugiyama\Mark{\textdied}\Mark{*}\\
}

\aistatsaddress{ \Mark{*}The University of Tokyo, Japan \And \Mark{\textdied}The RIKEN Center for AIP, Tokyo, Japan}
]




\begin{abstract} 
Analyzing the underlying structure of multiple time-sequences provides insights into the understanding of social networks and human activities. In this work, we present the \emph{Bayesian nonparametric Poisson process allocation}
(BaNPPA), a latent-function model for time-sequences, which  automatically infers the number of latent functions. We model the intensity of each sequence as an infinite mixture of latent functions, each of which is obtained using a function drawn from a Gaussian process. We show that a technical challenge for the inference of such mixture models is the unidentifiability of the weights of the latent functions. 
We propose to cope with the issue by regulating the volume of each latent function within a variational inference algorithm.
Our algorithm is computationally efficient and scales well to large data sets. We demonstrate the usefulness of our proposed model through experiments on both synthetic and real-world data sets.

\end{abstract} 

\section{Introduction}
The Internet age has made it possible to collect a huge amount of temporal data available in the form of time-sequences. Each time-sequence consists of time-stamps which record the arrival times of events, e.g., postings of tweets on Twitter or announcements of life events on Facebook.  In real-world problems arising in areas such as social science \cite{gao2015modeling}, health care \cite{lian2015multitask} and crime prevention \cite{liu2003criminal}, time-sequence modeling is extremely useful since it can help us in predicting future events and understanding the reasons behind them. 


When modeling a collection of time-sequences, a key idea is to cluster the data into groups while allowing the groups to remain linked to share statistical strengths among them~\cite{teh2005sharing}. Several models have been proposed on the basis of this simple idea, e.g., the convolution process \cite{gunter2014efficient}, nonnegative matrix factorization (NMF) \cite{miller2014factorized}, and latent Poisson process allocation (LPPA) \cite{lloyd2016latent}. These models employ latent factors to share statistical strengths and combine these functions to model the correlations within and among time-sequences.



Among these models, LPPA is a powerful approach because it uses latent functions obtained from a Gaussian process (GP). Such continuous latent functions are able to flexibly model complex structures in the data, and do not require a careful discretization such as that used in NMF.
%
%
However, a limitation of LPPA is that the number of latent functions needs to be set beforehand. If the chosen number is much larger than the actual number of latent functions required to explain the data, LPPA will still use all the latent functions. There is no mechanism in LPPA to prevent this ``spread" of allocation, which creates a problem when our goal is to understand the reasons behind the events observed in the data. For example, this might make it difficult to explain the retweet patterns in Twitter where a sudden avalanche of retweets is quite common \cite{gao2015modeling}. For such cases, LPPA will simply use all its latent functions to explain these spiky patterns.


\begin{figure}[!th]
	\begin{center}
		\includegraphics[width=\columnwidth]{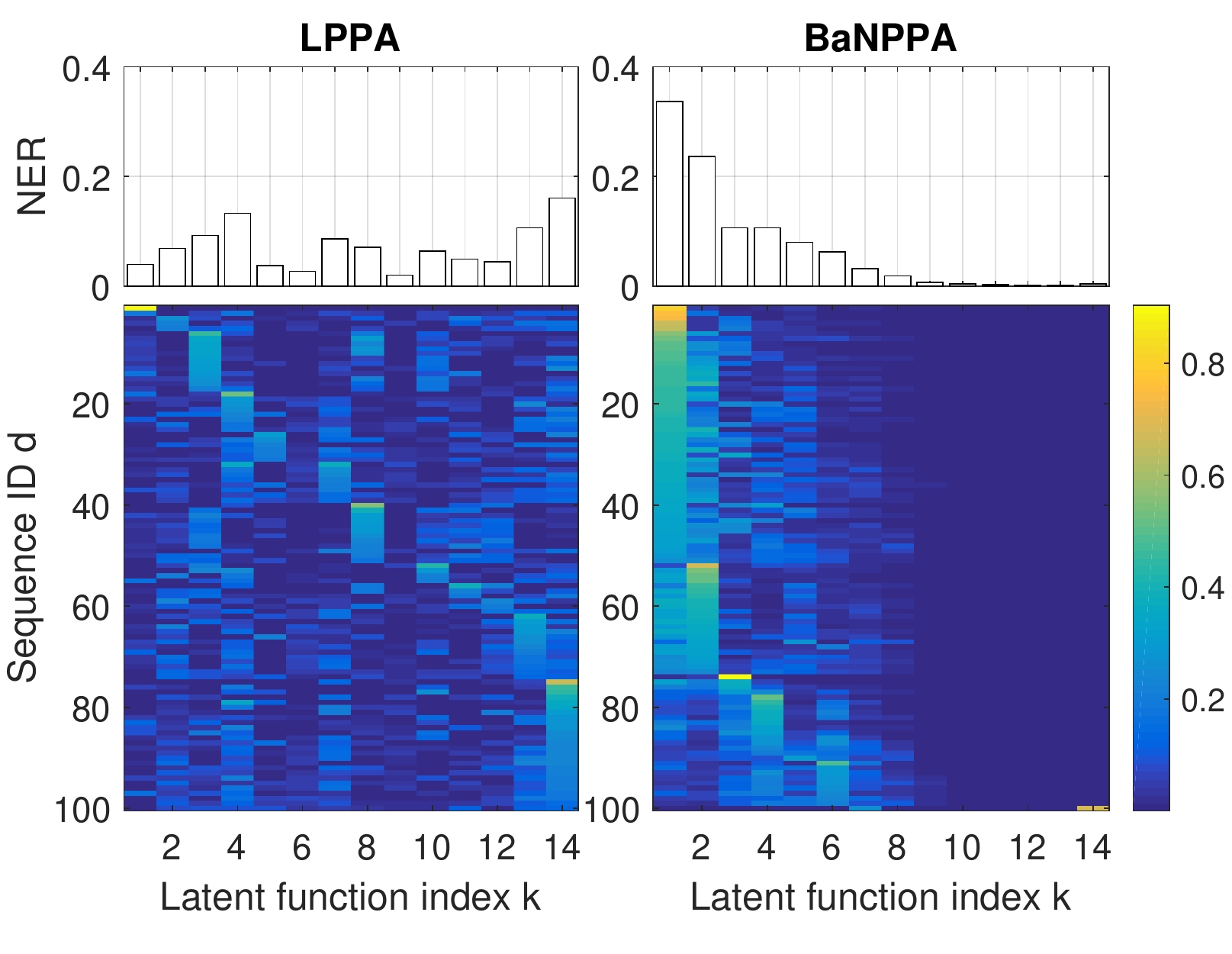}
		\caption{This figure illustrates that, even when a large number of latent functions are provided, BaNPPA automatically selects only a few to explain the data, while LPPA uses them all. The bottom plots show the weights of the latent functions for the Microblog dataset, where we see that BaNPPA assigns zero weights to many latent functions, while LPPA assigns every latent function to at least a few time-sequences. The top plots show a score which measures the average responsibility of the latent functions. See Section \ref{sec:expt} for details.}
		\label{fig:MatrixPlot}
	\end{center}
\end{figure}

In theory, the above problem can be solved by using Bayesian nonparametric (BNP) methods \cite{hjort2010bayesian} which can automatically determine the number of relevant latent functions. However, as we show in this paper, a direct application of existing BNP methods to LPPA is challenging. An obvious issue is that such an application typically requires the use of Markov Chain Monte Carlo (MCMC) algorithms which are slow to converge for large data sets. A more essential and technically intricate issue is that a naive application of BNP methods to LPPA suffers from an unidentifiability issue because the GP-modulated latent functions are not normalized. Unidentifiability is bad news when our focus is to understand the reasons behind the events.

In this paper, we propose a new model to solve these problems. Our model, which we call the \emph{Bayesian nonparametric Poisson process allocation} (BaNPPA) model, enables automatic inference of the number of latent functions while retaining the accuracy, interpretability, and scalability of LPPA.
Unlike hierarchical models \cite{teh2005sharing} which promote sharing through a common base measure, latent functions in our model are shared across all time-sequences due to the size-biased ordering which promotes sharing by penalizing latent functions that belong to higher indices \cite{gopalan2014bayesian, pitman2015size}. The size-biased ordering restricts the use of all latent functions. 
Figure \ref{fig:MatrixPlot} illustrates this on a real data set.

We propose a computationally efficient variational inference algorithm for BaNPPA and solve the unidentifiability issue by adding a constraint within the inference algorithm to regulate the volume of each latent function. Overall, we present a scalable and accurate Bayesian nonparameteric approach for time-sequence modeling. Figure \ref{fig:EgMicroblog} shows an example of the results obtained with BaNPPA on a real-world dataset.

\begin{figure}[!t]
	\begin{center}
		\includegraphics[width=\columnwidth]{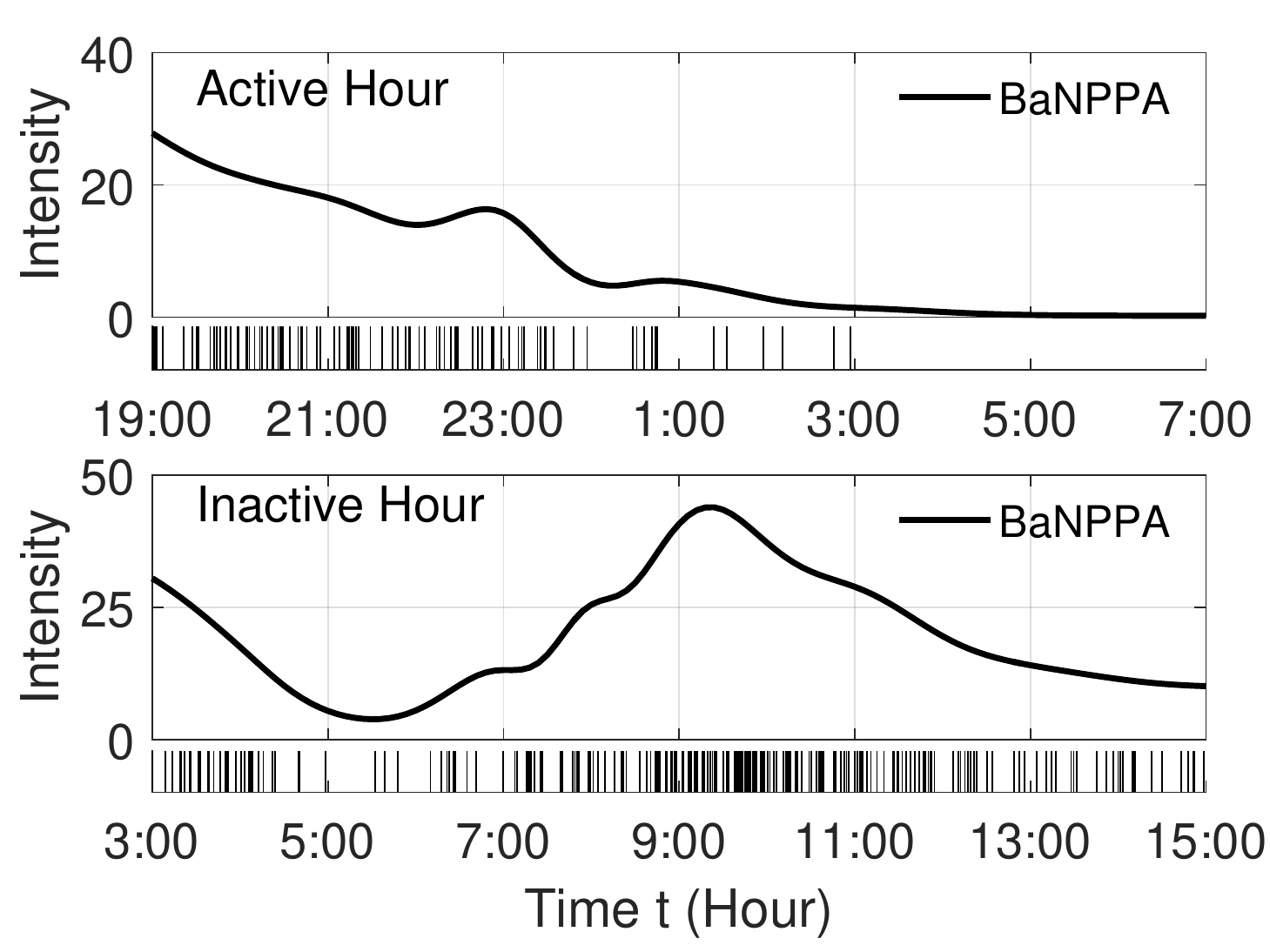}
		\caption{Illustrations of intensity functions obtained with BaNPPA on the Microblog dataset. Each plot shows a time-sequence (with small bars at the bottom) and the corresponding estimated intensity function (with solid lines). The top and bottom plots are for tweets posted during active and inactive hours of the day, respectively.   }\label{fig:EgMicroblog}
	\end{center}
\end{figure}

\section{Time-Sequence Modeling and Its Challenges}
Our goal is to develop a flexible model for time-sequences. We consider time-sequence that contain a set of time-stamps which record the occurrence of events. We denote a time-sequence by $\mathbf{y}_d = \{t^d_n\in \mathcal{T}\}_{n=1}^{N_d}$, where $t_n^d$ is the $n$'th time-stamp in the $d$'th time-sequence,   $\mathcal{T}\subset \mathbb{R}^+$ is a specified time window, and $N_d$ is the number of events. The set of $D$ time-sequences is denoted by $Y$.



A common approach to model such time-sequences is to use the temporal Cox process \cite{adams2009tractable,lloyd2015variational} which uses a stochastic intensity function $\lambda(t):\mathbb{R}^+\rightarrow \mathbb{R}^+$ to model the arrival times~\cite{kingman1993poisson}. Given the intensity function $\lambda(t)$ and a time window $\mathcal{T}\subset \mathbb{R}^+$, the number of events $N(\mathcal{T})$ is Poisson distributed with rate parameter $\int_{\mathcal{T}}\lambda(s)ds$. Therefore, the likelihood of a sequence $\mathbf{y}_d$ drawn from the temporal Cox process is equal to: 
\begin{equation}
	\mathrm{P}(\mathbf{y}_d|\lambda_d)=\exp\Big(-\int_{\mathcal{T}}\lambda_d(s)ds\Big)\prod_{n=1}^{N_d}\lambda_d(t_n^d).
	\label{Equ:PP}
\end{equation}
In LPPA, to model multiple time-sequences, the $d$'th time-sequence is assumed to be generated by a temporal Cox process with an intensity function $\lambda_d(t)$ which is modeled as follows:
\begin{equation}
	\lambda_d(t)=\sum_{k=1}^{K}\theta_{dk}f_k^2(t),\quad \theta_{dk}\geq 0,
	\label{Equ:lambda}
\end{equation}
where $f_k(t)$ is a function drawn from a GP prior, $\theta_{dk}$ is its weight, and $K$ is the number of latent functions. To ensure the non-negativity of $\lambda_d$, $f_k$ are squared and weights $\theta_{dk}$ are required to be non-negative. 

LPPA is a powerful approach which also enables scalable inference. Due to the GP prior, LPPA is capable of generating intensity functions with complex shapes. Scalable inference is made possible by using variational inference for sparse GPs \cite{titsias2009variational}.
The overall computational complexity is $O(KNM^2)$, where $N$ is the total number of events in $Y$ and $M$ is the number of pseudo inputs in sparse GPs. 


One issue with LPPA is that $K$ needs to be set beforehand. This not only increases the computation cost, but also creates a serious interpretability issue which is undesirable when our goal is to understand the reasons behind the data. 
Specifically, when the number of latent functions is much larger than what it needs to be, LPPA uses all of them, making it difficult to interpret the results. We give empirical evidence in support of this claim and correct this behavior by using a BNP method.

Unfortunately, a direct application of the existing BNP methods increases the computation cost and limits the flexibility of the model. The problem lies in the strict requirement that the latent functions needs to be a \emph{normalized density function}, i.e., a function with a volume\footnote{The volume of a function $f(t),t\in\mathcal{T}$ is defined as the integral $\int_{\mathcal{T}}f(t)dt$.} equal to 1.
For example, previous studies, such as \citet{kottas2006dirichlet, ihler2007learning}, model the intensity functions with the following Dirichlet process mixture model,
\begin{align}
\lambda_d(t) = s_d \sum_{k=1}^\infty\theta_{dk}\tilde{f}(t;\psi_k) , \label{Equ:lambdaAlter0}
\end{align}
where $\tilde{f}$ are normalized density functions with parameters $\psi_k$ and the weights $\theta_{dk}$ are non-negative and sum to one $\sum_{k=1}^\infty \theta_{dk} = 1$ ($s_{d}>0$ is the rate parameter that models the number of events $N(\mathcal{T})$).
Since the function $\tilde{f}$ needs to be normalized, the choices are limited to well-known density function which may not be very flexible to model complex time-sequences, e.g., \citet{kottas2006dirichlet} used the beta distribution and \citet{ihler2007learning} used the truncated Gaussian distribution.
In addition, such models require MCMC sampling algorithms which usually converge slowly on large data sets.
To the best of our knowledge, it is still unclear how to build a nonparametric prior for such normalized density functions while enabling scalable inference, e.g., via variational methods.

We propose a nonparameteric model, called the Bayesian nonparameteric Poisson process allocation (BaNPPA), which avoids the need to explicitly specify the number of latent functions while retaining the flexibility and scalability of the LPPA model. Our method combines the models shown in Equation \eqref{Equ:lambda} and \eqref{Equ:lambdaAlter0}. We show that this direct combination has an unidentifiability issue, and we fix the issue within a variational-inference algorithm. Our approach therefore combines the strengths of the LPPA and BNP models while keeping their best features.

\section{Bayesian Nonparametric Poisson Process Allocation (BaNPPA)}
\label{Sec:3}
As discussed earlier, we need to set the number of latent functions beforehand for LPPA. We fix this issue by proposing a new model called BaNPPA that combines the non-parametric model of Equation \eqref{Equ:lambdaAlter0} with the LPPA model shown in Equation \eqref{Equ:lambda}.
%
%
Specifically, we let $\tilde{f}$ in Equation \eqref{Equ:lambdaAlter0} to be equal to $f^2_k(t)$, as follows: 
\begin{equation}
\lambda_d(t)=s_d\sum_{k=1}^{\infty}\theta_{dk}f^2_k(t), \,\, \textrm{where } s_d,\theta_{dk}>0, \,\, \sum_{k=1}^{\infty}\theta_{dk}=1 .
\label{Equ:lambdaAlter}
\end{equation}
Similar to LPPA, we draw functions $f_k(t)$ from a Gaussian process. We draw the weights $\theta_{dk}$ using a stick-breaking process, and use a Gamma prior for the scalar rate parameter $s_d$.
The final generative model of BaNPPA is shown below:
%
\begin{enumerate}
	\item Draw $f_k\sim \mathrm{GP}(m_k(t),\kappa_k(t,t'))$ for $k=1,\ldots,\infty$.
	\item For each sequence $d=1,\ldots,D$,
	\begin{itemize}
		\item Draw $\theta_{dk}'\sim \mathrm{Beta}(1,\alpha)$ for $k=1,\ldots,\infty$. 
		\item Calculate  $\theta_{dk}=\theta_{dk}'\prod_{l=1}^{k-1}(1-\theta_{dl}')$.
		\item Draw $s_d\sim \mathrm{Gamma}(a_0,b_0)$.
		\item Draw the points $\mathbf{y}_d\sim \mathrm{PP}(s_d\sum_{k=1}^{\infty}\theta_{dk}f_k^2(t))$.
	\end{itemize}
\end{enumerate}
%
%
%
In the model, we denote a Poisson process with rate parameter $\lambda$ by $\mathrm{PP}(\lambda)$, a beta distribution with shape parameters $a$ and $b$ by $\mathrm{Beta}(a,b)$ and a gamma distribution with shape parameter $a$ and rate parameter $b$ by $\mathrm{Gamma}(a,b)$.

The above model automatically determines the number of latent functions due to the size-biased ordering \cite{pitman2015size} obtained by using the stick-breaking process.
Both the latent functions $\{f_k^2(t)\}$ and the weights $\{\theta_{dk}\}$ use the same set of indices $k=1,\ldots,\infty$. This implies that when generating the $d$'th time-sequence, the latent function at a lower index $k$ is more likely to be assigned a larger weight $\theta_{dk}$. This encourages the model to use some latent functions more than the others. 

Unfortunately, the above model is unidentifiable. This is because, unlike the nonparametric model of Equation \eqref{Equ:lambdaAlter0}, the latent functions $\{f_k^2\}$ are unnormalized, and therefore many combinations of $s_d$, $\{\theta_{dk}\}$ and $\{f_k\}$ might give us the same model.
For example, the following transformation gives the same intensity function for any $\epsilon_k >0$: 
\begin{equation}
   s_d\bar{\epsilon}_d,\left\{\frac{\theta_{dk}\epsilon_{k}}{\bar{\epsilon}_d}\right\},\left\{\frac{f_k}{\sqrt{\epsilon_k}}\right\} ,
\end{equation}
where $\bar{\epsilon}_d := \sum_{v=1}^{\infty}\theta_{dv}\epsilon_{v}$.
We can check this by substituting the triplet in Equation \eqref{Equ:lambdaAlter}. Since the volume of each $f_k$ is not regulated, we can move the ``mass" around between the components of the model.

This type of unidentifiability is problematic when our goal is to understand the reasons behind the patterns in the data. 
In our experiments, we observe that this leads to a shrinkage of the latent functions which affects interpretability as well as the quality of the estimated hyperparameters. In Section \ref{Sec:Ident}, we propose a way to fix this issue by adding a constraint on the volume of the latent function. 

There is also another common identifiability problem in such mixture models.
\citet{lloyd2016latent} claimed that LPPA is unidentifiable and non-unique since there may be multiple decompositions that are well supported by the data. In BaNPPA, due to the ordering constraints imposed by size-biased ordering, this unidentifiability issue is reduced. 

We also need to guarantee that the expected intensity function at any time $\mathbb{E}[\lambda_d(t)]$ is finite. This can be achieved by fixing the GP hyperparameters. For example, assuming a constant mean function $m_k(t) \equiv g$ with $g$ being the constant, and an automatic relevance determination (ARD) covariance functions $\kappa_k(t,t')=\gamma_k\exp(-(t-t')^2/(2a_k^2))$, we can fix the hyperparameters $g$ and $\gamma_k$, which ensures that the mean and
variance of each latent function $f_k$ are finite. In that case, the value of $\mathbb{E}[\lambda_d(t)]$ is
bounded due to the following relation: 
\begin{align}
\mathbb{E}[\lambda_d(t)]& = \mathbb{E}\left[s_d\sum_{k=1}^\infty\theta_{dk}f_k^2(t)\right]\leq\mathbb{E}[s_d]\max_k\mathbb{E}[f_k^2(t)]\nonumber\\
&=\frac{a_0}{b_0}\max_k\Big(\mathbb{E}^2[f_k(t)]+\mathrm{Var}[f_k(t)]\Big).
\end{align}

\section{Inference}\label{Sec:4}
In this section, we first describe the general variational inference framework and provide a solution to the identifiability issue in Section \ref{Sec:Ident}. A derivation of the evidence lower bound (ELBO) and its derivatives are provided in the Appendix A.

\subsection{Variational Inference}
Denote $\bm{s}\stackrel{\Delta}{=}\{s_d\}$, $\Theta\stackrel{\Delta}{=}\{\theta_{dk}\}$ and $\bm{f}\stackrel{\Delta}{=}\{f_k\}$. Let $\bm{H}$ be the set of hyperparameters of the GP covariance function. The joint distribution of BaNPPA can be expressed as 
\begin{align*}
p(Y,\bm{\Theta,s,f})&=\prod_{d=1}^{D}p(\bm{y}_d|\bm{f,\theta_d},s_d)\prod_{d=1}^{D}\prod_{k=1}^\infty p(\theta_{dk};\alpha)\\
&\times\prod_{d=1}^{D}p(s_d;a_0,b_0)\prod_{k=1}^\infty p(f_k;g,\bm{H}).
\end{align*}
We approximate the posterior distribution over $\Theta$ and $\bm{f}$, while computing a point estimate of $\bm{s}$. We follow \citet{blei2006variational} to truncate the number of latent functions to $K$ which we select to be larger than the expected number of latent functions used by the data. For the GP part, we use the same set of pseudo inputs $\{t_m\}_{m=1}^M$, $M<N$ for each $f_k$ to reduce the number of variational parameters \cite{lloyd2016latent}.
Denote $\bm{f}_{k,M}$ to be the vector $[f_k(t_1),\ldots,f_k(t_M)]^\top$, $\kappa_{k,MM}$ to be a covariance matrix whose $i,j$'th entry is equal to $\kappa_k(t_i,t_j)$, and $\bm{g}_M\in\mathbb{R}^M$ to be a vector all of whose elements are equal to $g$. We choose the following forms for the variational distributions of $\theta_{dk}$ and $\bm{f}_{k,M}$:
\begin{align*}
q(\theta_{dk})&=\mathbb{I}(k< K)\mathrm{Gamma}(\tau_{dk,0},\tau_{dk,1})\\
&+\mathbb{I}(k= K)\delta(1)+\mathbb{I}(k> K)p(\theta_{dk}),\\
q(\bm{f}_{k,M})&=\mathbb{I}(k\leq K)
\mathcal{N}(\bm{\mu}_k,\Sigma_k)\\
&+\mathbb{I}(k> K)\mathcal{N}(\bm{g}_M,\kappa_{k,MM}),
\end{align*}
where $\mathbb{I}(\cdot)$ is the indicator function, $\delta(\cdot)$ is a dirac-delta function, and $\bm{\mu}_k$ and $\Sigma_k$ are the mean and covariance of a Gaussian distribution. Following \citet{lian2015multitask}, we use the re-parametrization $\Sigma_k = L_kL_k^T$ by Cholesky decomposition.

Using the approximation of \citet{titsias2009variational} and a mean-field assumption over $\Theta$, we can use the following final variational distribution: 
\begin{equation*}
   q(\bm{f}, \Theta) \stackrel{\Delta}{=} \prod_{k=1}^{\infty} p(\bm{f}_{k,N}|\bm{f}_{k,M})q(\bm{f}_{k,M})\prod_{d=1}^{D}\prod_{k=1}^\infty q(\theta_{dk}).
\end{equation*}
Denoting $\bm{\tau}\stackrel{\Delta}{=} \{(\tau_{dk,0},\tau_{dk,1})\}$ and $\bm{L}\stackrel{\Delta}{=}\{L_k\}$, we get the following set of variational parameters and hyperparameters to be optimized: $\Phi=\{\bm{\tau},\bm{\mu},\bm{L},\bm{H},a_0,b_0,\alpha, \bm{s}\}$. 
 
\subsection{An Alleviation Solution to the Identifiability Problem}
\label{Sec:Ident}
So far, the framework seems very traditional. However, as we mentioned in Section \ref{Sec:3}, this model has an additional identifiability problem which might make interpretability difficult. In this section, we propose a solution to alleviate this issue.

A straightforward option is to directly impose a constraint on the volume of the latent functions $\int_{\mathcal{T}}f_k^2(t)dt$, where $f_k$ is drawn from the posterior process $p(f_k|Y)$. However this is intractable. In order to obtain a tractable constraint, we could instead impose a constraint on the following expectation:
\begin{equation}
\iint_{\mathcal{T}}p(f_k|Y)f_k^2(s)dsdf_k=A,~k=1,\ldots,K,\label{Equ:ConstraintP}
\end{equation}
where $A$ is a positive constant. Within the variational inference framework, we use the variational distribution $q(f_k)$ to approximate the posterior $p(f_k|Y)$, and add the following constraint to each latent function:
\begin{equation}
\iint_{\mathcal{T}}q(f_k)f_k^2(s)dsdf_k=A,\quad k=1,\ldots,K\label{Equ:Constraint}.
\end{equation}
The above constraint can be easily computed unlike the volume constraint on the function $f_k$. In our experiments, we set $A = N/D$ where $N$ is the total number of events in the data and $D$ is the number of time- sequences in $Y$.

\subsection{Optimization with Equality Constraints}
Given the equality constraints in Equation \eqref {Equ:Constraint}, the optimization process can be formulated as follows, where we denote the ELBO as $\mathcal{L}_1(\Phi)$:
\begin{align}
\max_\Phi~\mathcal{L}_1(\Phi) &\quad\mathrm{s.t.} ~h_k(\Phi)=0,~k=1,\ldots,K, \label{Equ:My Problem}\\
h_k(\Phi)&=\int_{\mathcal{T}}\mathbb{E}_q[f_k^2(s)]ds-A.\nonumber
\end{align}
Problem \eqref{Equ:My Problem} is an optimization problem with equality constraints and we use the augmented Lagrangian method \cite{bertsekas2014constrained} to transform Problem \eqref{Equ:My Problem} into a series of related optimization problems indexed by $i$:
\begin{equation}
\max_\Phi~\mathcal{L}_1(\Phi)-\sum_{k=1}^{K}\Big(w_{ik}h_k(\Phi)+\frac{1}{2}v_{ik}h_k^2(\Phi)\Big),  
\label{Equ:Penalty Solution}
\end{equation}
where $\{w_{ik}\}$ is a bounded sequence and $\{v_{ik}\}$ is a non-negative monotonically-increasing sequence with respect to $i$. We denote this objective $L_{\bm{v_i}}(\Phi,\bm{w_i})$. For each optimization problem in Equation \eqref {Equ:Penalty Solution}, $L_{\bm{v_i}}(\Phi,\bm{w_i})$ is still upper bounded (a proof is given in Appendix A). Thus if we use coordinate ascent with respect to $\Phi$, the algorithm is guaranteed to arrive at a local maximum.

To set $\bm{v}_{ik}$ and $\bm{w}_{ik}$, we follow the suggestions from \citet{bertsekas2014constrained}, and set $\bm{v}_{i+1,k}=4\bm{v}_{ik}$ and $ \bm{w}_{i+1,k}=\bm{w}_{ik}+\bm{v}_{ik}\bm{h}_k(\Phi_i)$. We initialize $v_{1k}=4,w_{1k}=1,\forall k$.

\subsection{Computational Complexity}
\label{Sec:Complexity}
Optimization problems shown in Equation \eqref {Equ:Penalty Solution} are not significantly more expensive than the original optimization problem. Although in Equation \eqref {Equ:Penalty Solution}, we have to optimize additional parameters, the bottleneck is still the matrix-matrix multiplication in the evaluation of $q(\bm{f}_{k,N})$. For one iteration of the training procedure, the computational complexity is $\mathcal{O}(KNM^2)$ , which is the same as LPPA.

One might expect that the total computational complexity of our algorithm is worse than LPPA because we have to solve a sequence of problems. We find that ``warm starts" are very effective in improving the convergence of our algorithm \cite{bertsekas2014constrained}. Namely, we reuse the final value $\Phi_{i-1}$ of the previous optimization as the starting value for the $i$'th round and terminate the training process when the relative change in the likelihood is small.
In our experiments, we observed that the convergence of BaNPPA is rather fast and comparable to LPPA.

\section{Experiments}
\label{sec:expt}
In this section, we evaluate our proposed BaNPPA model and compare it with LPPA. To measure the effect of adding the constraint shown in Equation \eqref{Equ:Constraint}, we also compare to a variant of BaNPPA which does not contain any constraints. We call it BaNPPA with No Constraints, i.e., BaNPPA-NC. This gives us three methods to compare: LPPA, BaNPPA-NC, and BaNPPA. The code to reproduce our experiments can be found at
\url{github.com/Dinghy/BaNPPA}.

\begin{table}[t]
	\caption{Data sets used for the experiments. Here, $D$ is the number of time-sequences, $N_\mathrm{train}$ and $N_\mathrm{test}$ are the total number of events in the training and test set respectively, and $\mathcal{T}$ is the time window.}
	\centering
	\vspace*{-3mm}
	\begin{center}
		\begin{tabular}{lllll}
			{ Data set}     & { D}   & $N_\mathrm{train}$ & $N_\mathrm{test}$ & $\mathbf{\mathcal{T}}$ \\
			\hline\\
			Synthetic A & 200   &6,304 & 6,010  &  [0,60] \\
			Synthetic B & 250   &8,074 & 8,110  &  [0,80] \\
			Microblog     & 500  &44,628 & 44,352  &  [3,15] \\
			Citation     & 600  &106,113& 106,340 &  [0,20]
		\end{tabular}
	\end{center}
	\label{Table:Data set}
\end{table}

We test the three methods on two synthetic and two real-world data sets.
Table \ref{Table:Data set} summarizes the overall statistics and we give detailed information below.

\textbf{1) Synthetic A.} We sample 200 time-sequences from a mixture of 4 latent functions $\bar{f}(t)$ shown in the top plot of Figure \ref{fig:SynInt}. The intensity function is defined as follows: $\lambda_d(t)=s_d\sum_{k=1}^{4}\theta_{dk}\bar{f}(t)$, where $s_d\sim \mathrm{Gamma}(2,3)$ and $\bm{\theta}_d\sim\mathrm{Dir}(1.2,1,0.8,0.6)$. Here $\mathrm{Dir}(\cdot)$ denotes the Dirichlet distribution. More details on the data generation process can be found in Appendix C.

\textbf{2) Synthetic B.} This data set is similar to Synthetic A but there are 6 latent functions shown in the bottom plot of Figure \ref{fig:SynInt}. In Synthetic B data set, $s_d\sim \mathrm{Gamma}(2,3)$ and $\bm{\theta}_d\sim\mathrm{Dir}(1.2,1,0.8,0.6,0.5,0.5)$.

\begin{figure}[t]
	\begin{center}
		\includegraphics[width=\columnwidth]{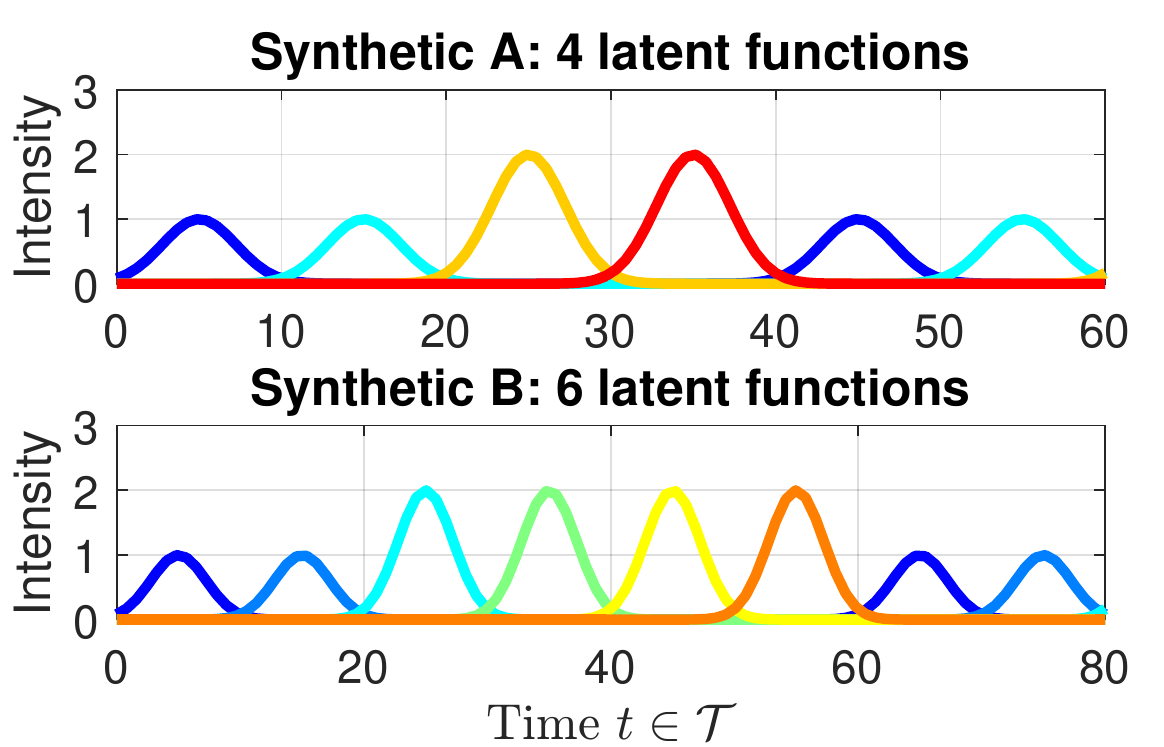}
		\caption{Latent functions used to create synthetic data set A and B are shown in the top and bottom plots, respectively. In both the data sets, there are two latent functions with two modes while the rest have only one mode.}\label{fig:SynInt}
	\end{center}
\end{figure}

\textbf{3) Microblog data set.} This data set contains 500 tweets and all retweets of each tweet from $7$ tweet-posters on Sina micro-blog platform obtained through the official API\footnote{\url{http://open.weibo.com/wiki/Oauth/en}}. Two examples are shown in Figure \ref{fig:EgMicroblog}. Through time-sequence modeling, we can try to understand the retweet patterns. For example, one reason could be that the tweets posted at an inactive hour (late at night) will regain the
attention from the followers several hours later next morning \cite{ding2015predicting,gao2015modeling}. BaNPPA could help us understand such reasons as illustrated in Figure \ref{fig:EgMicroblog}.

\textbf{4) Citation data set.} This data set contains the Microsoft academic graph until February 5th, 2016 obtained from the KDDcup 2016 \footnote{\url{https://kddcup2016.azurewebsites.net/}}. The original data set contains 126,909,021 papers and we use a subset of it.
Time-sequence modeling can be used to understand the patterns of citations, e.g., some papers quickly get citations while some others get it slowly. Two examples of this data set are given in the Appendix C.1.

\begin{figure*}[ht]
\centering
	\includegraphics[width=\textwidth]{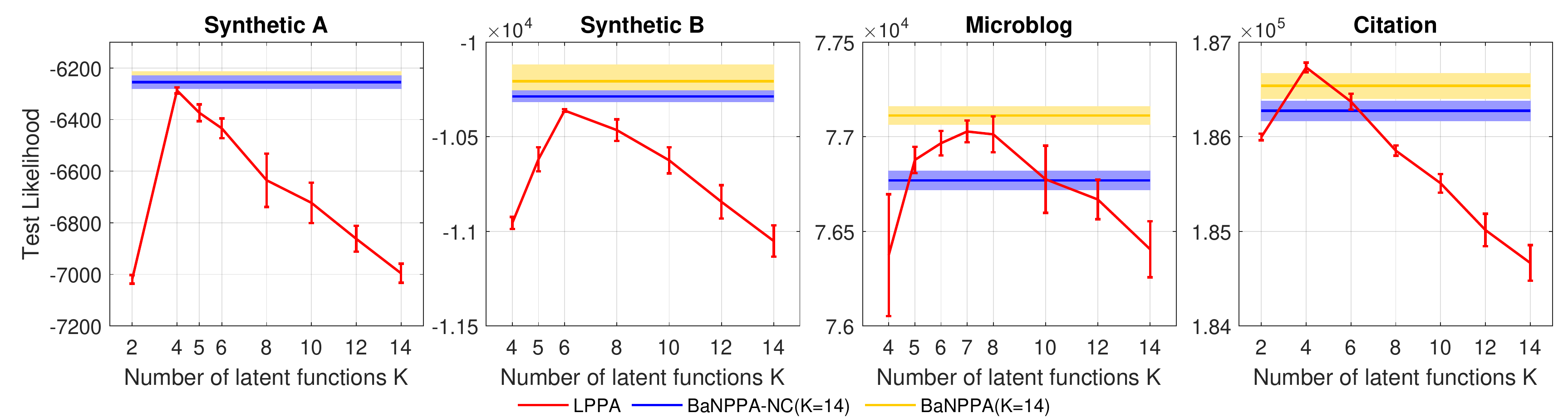}
	\caption{BaNPPA gives the best test-likelihoods (higher is better) and performs comparably to the best setting of $K$ for LPPA. For BaNPPA/BaNPPA-NC, we use a fixed value of $K=14$. Error bars and shaded areas show the 95\% confidence intervals.}
	\label{fig:Comparison}
\end{figure*}

\begin{figure*}[ht]
	\includegraphics[width=\textwidth]{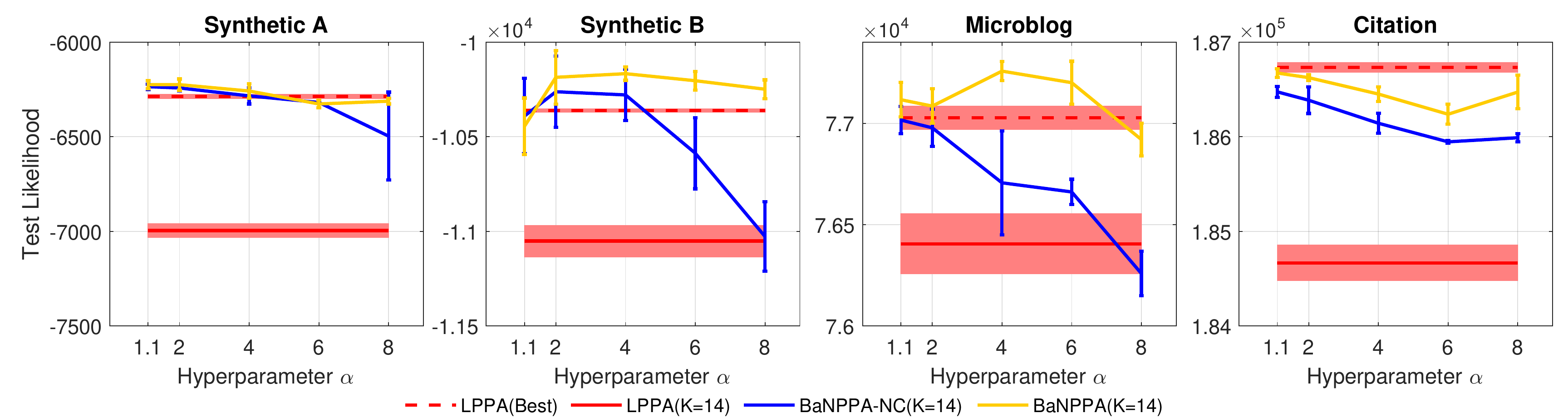}
	\caption{For a variety of hyperparameter values, BaNPPA gives the best performance which is also comparable to the best performance of LPPA and much better than LPPA with $K=14$. Performance of BaNPPA-NC degrades with increasing $\alpha$ while performance of BaNPPA is relatively stable.}
	\label{fig:Comparison_AlphaVary}
\end{figure*}

\textbf{Evaluation Metrics}. We evaluate the methods using the two metrics described below.

To measure the predictive performance, we follow \citet{lloyd2015variational} and use the following approximation to the test likelihood which we denote by $\mathcal{L}_{\mathrm{test}}(Y_{test}, \Theta, \bm{s}) = $
	\begin{align}
      &\sum_{d=1}^{D}\sum_{n=1}^{N^{d}_{\mathrm{test}}}\ln\Big( s_d\sum_{k=1}^{K}\theta_{dk}\exp\Big(\mathbb{E}_q (\ln f_k^2(t_n^d))\Big)\Big)\nonumber \\
	&-\sum_{d=1}^{D}s_d\sum_{k=1}^{K}\theta_{dk}\int_\mathcal{T}\mathbb{E}_q[f_k^2(s)]ds.
	\label{Equ:Test likelihood}
	\end{align} 
	This is a lower bound to the test likelihood $\ln p(Y_{\mathrm{test}}|Y_{\mathrm{train}})$ and a higher value means a better approximation of the test likelihood. For LPPA, the allocation parameters $\theta_{dk}$ are the point-estimated weights and the rate parameter $s_d=1$. For BaNPPA and BaNPPA-NC, we report averaged value over $q(\theta_d)$. We can compute a similar approximation $\mathcal{L}_{\mathrm{train}}$ on the training data. Detailed derivations and explanations are given in Appendix B.  
	
To measure the responsibility of each latent function in the model, we first define the normalized allocation matrix $\hat{\Theta}\in \mathbb{R}^{D\times K}_+$ whose $(d,k)$'th entry is equal to, 
\begin{equation}
	\hat{\theta}_{dk} =\frac{\mathbb{E}_q[\theta_{dk}\int_{\mathcal{T}}f^2_k(s)ds]}{\sum_{m=1}^{K}\mathbb{E}_q[\theta_{dm}\int_{\mathcal{T}}f^2_m(s)ds]}.
	\label{Equ:Occupancy}
\end{equation} 
The normalization in the above matrix tries to remove the unidentifiability introduced due to the unconstrained volume of the latent functions in LPPA and BaNPPA-NC.
%
Based on $\hat{\Theta}$, we can compute a normalized score that can measure the responsibility of each latent function. We define the normalized expected responsibility (NER) $\hat{\upsilon}_k=\sum_{d=1}^{D}\hat{\theta}_{dk}/D,~k=1,\ldots,K$. A larger NER indicates that the corresponding latent function is more often occupied by the model. Another measure is the unnormalized expected responsibility (UNER) $\upsilon_k=\sum_{d=1}^{D}\mathbb{E}_q[\theta_{dk}]/D,~k=1,\ldots,K$, which omits the
contribution of the volume of $f_k^2$.

\textbf{Experimental settings.} Our goal is to measure the improvements obtained with the automatic inference of $K$ using BaNPPA. To do so, we fix $K$ to $14$ for BaNPPA and BaNPPA-NC, and compare them to LPPA with a range of values for $K$. We expect BaNPPA to give a comparable performance to the best setting of $K$ in LPPA.

Choice of $K$ also affects hyperparameter estimation. To measure it, we conduct experiments for two different methods of setting the hyper-parameter $\alpha$. In the first method, we learn $\alpha$ within a variational framework (initialize $\alpha=1$, see details in Appendix A). In the second method, we do not learn $\alpha$ rather fix it to one of the value in the set $\{1.1,2,4,6,8\}$. In both methods, all experiments were repeated five times.
We use a random initialization for the allocation matrix $\Theta$ and $\bm{\tau}$. For sparse GPs, we use 18, 24, 30 and 30 pseudo inputs for the four data sets, respectively. We follow the common practice and add a jitter term $\varepsilon I$ to the covariance matrix $\kappa_{k,MM}$ to avoid numerical instability \cite{bauer2016understanding}. 
For hyper-parameters $a_0$ and $b_0$ in the gamma distribution, we use the counts of events $\{N^{d}_{\mathrm{train}}\}_{d=1}^D$ to initialize $(a_0,b_0)$. 

To maintain the positivity constraints on $\bm{L}$ and $\bm{\tau}$, we use the limited-memory projected quasi-Newton algorithm \cite{schmidt2009optimizing}. For BaNPPA, we stop the training process when the relative change between $L_{\bm{v_i}}(\Phi_i,\bm{w_i})$ and $L_{\bm{v_{i+1}}}(\Phi_{i+1},\bm{w_{i+1}})$ is less than $10^{-3}$. For other methods, we terminate the training process when the relative change in ELBO is less than $10^{-3}$.

\textbf{Performance Evaluation.} Figure \ref{fig:Comparison} shows a comparison of the test-likelihoods when optimizing $\alpha$, while Figure \ref{fig:Comparison_AlphaVary} shows the same when $\alpha$ is fixed.
In Figure \ref{fig:Comparison}, the test likelihood of LPPA drops when increasing the number of latent functions $K$. As desired, BaNPPA achieves comparable results to the best setting of $K$ in LPPA. BaNPPA-NC also performs well but slightly worse than BaNPPA.
In Figure \ref{fig:Comparison_AlphaVary}, when increasing $\alpha$, the performance of BaNPPA stays relatively stable and comparable to the best setting of LPPA. The performance of BaNPPA-NC however degrades with increasing $\alpha$ for all data sets. This shows that the volume constraint in BaNPPA improves the performance.
Other performance measures such as the training likelihood and computation time as well as the value of optimized $\alpha$ are given in Appendix C. 
\begin{figure}[ht]
	\begin{center}
		\includegraphics[width=\columnwidth]{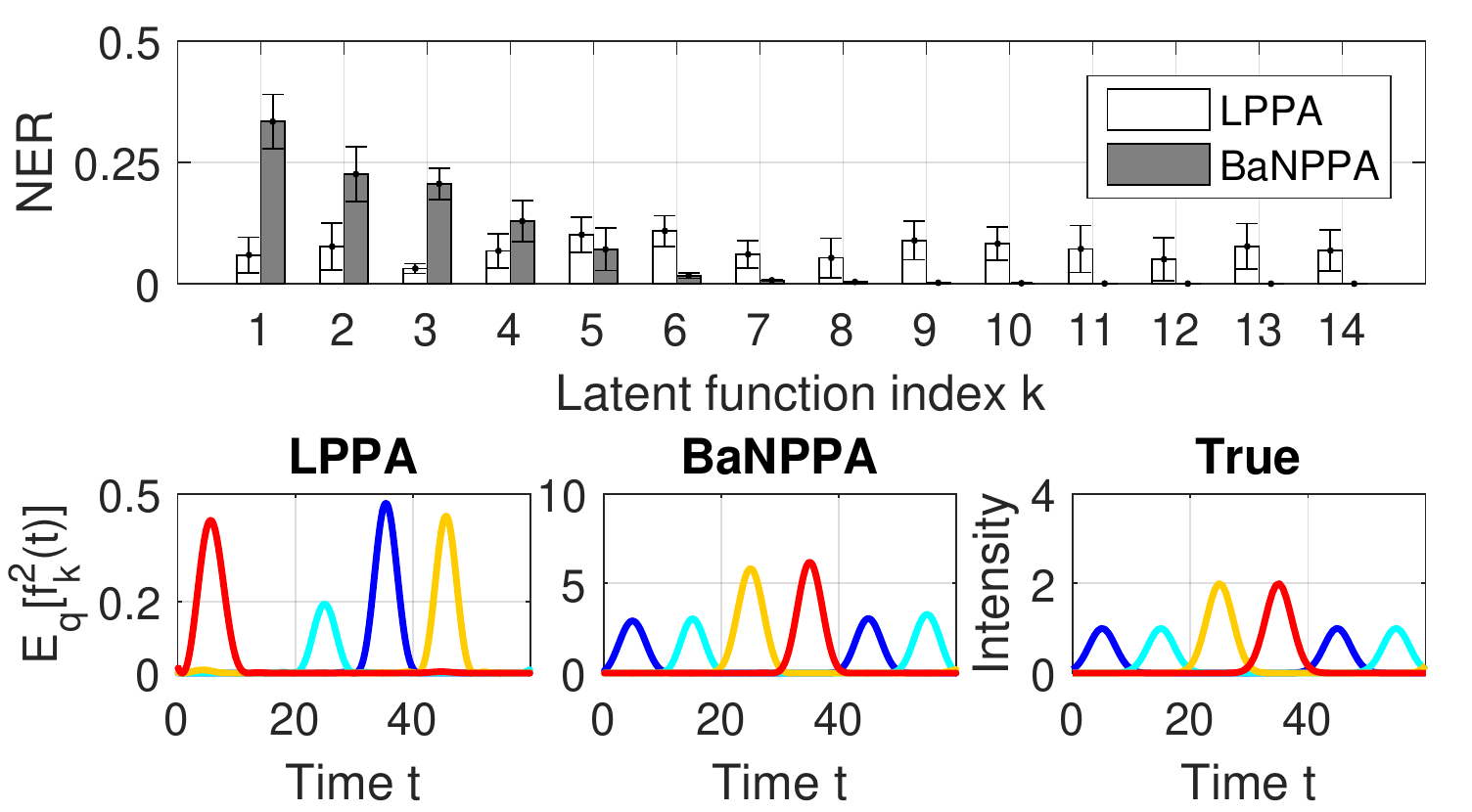}
		\caption{This figure shows that BaNPPA can reliably identify true latent functions for the Synthetic A data set. The top plot shows the NER scores for BaNPPA and LPPA for $K=14$ where we see that, under LPPA, all latent functions have nonzero NER, while, under BaNPPA, only a handful of them have significant NER scores. The bottom plot shows the top four latent functions (sorted according to NER) obtained for both the methods along with the true latent functions. We see that BaNPPA recovers functions very similar to the true functions.}\label{fig:NERBasis_Syn}
	\end{center}
\end{figure}
\begin{figure}[ht]
	\begin{center}
		\includegraphics[width=\columnwidth]{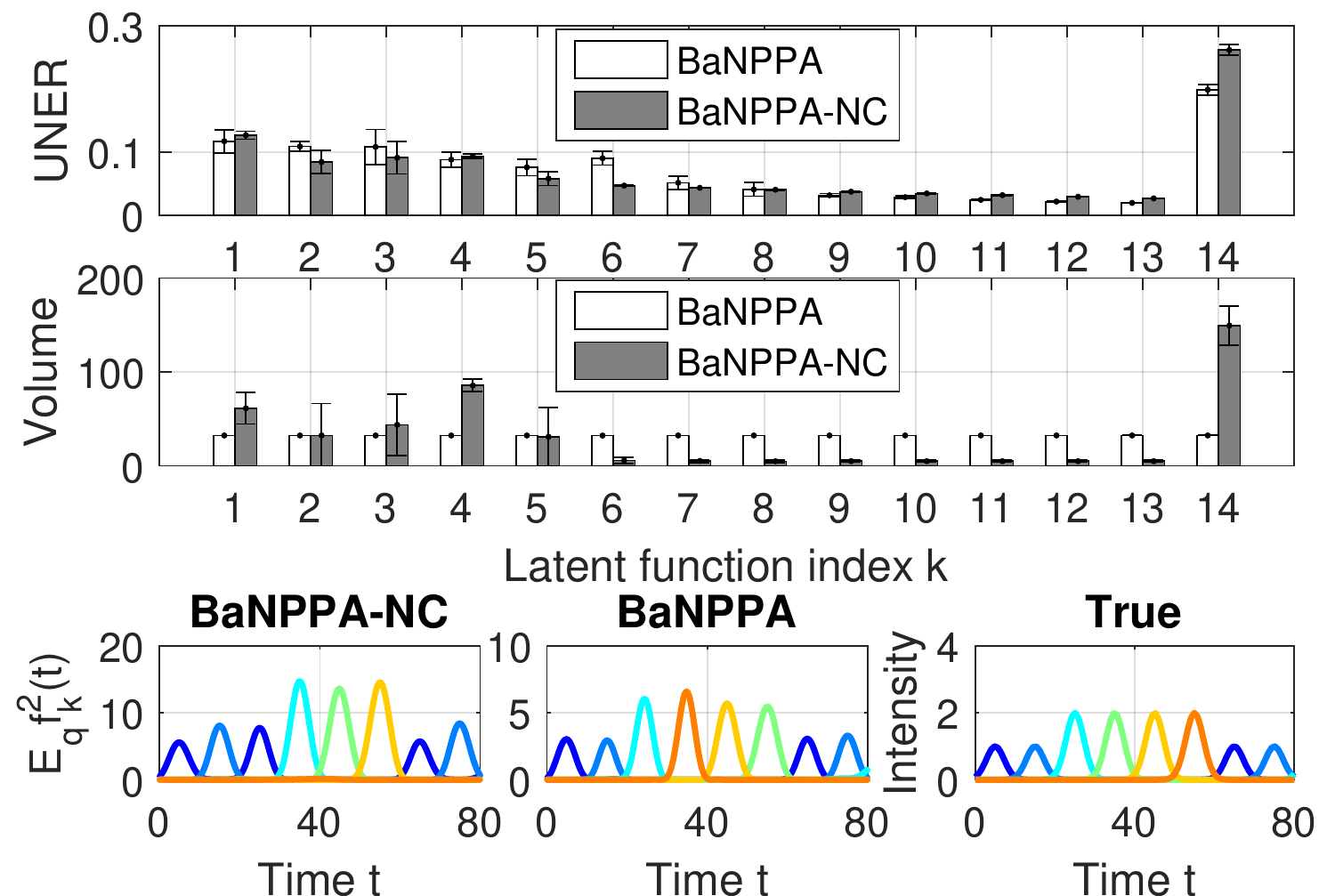}
		\caption{This figures shows that the volume constraint in BaNPPA is crucial to discover the true latent functions. Both BaNPPA and BaNPPA-NC obtain similar UNER score (top plot), yet the top latent functions obtained with the two methods are different (bottom plot). The imbalance in the volumes for BaNPPA-NC (middle plot) is the reason behind this difference. See the text for details.}\label{fig:UNERNERBasis_Syn}
	\end{center}
\end{figure}

For the Synthetic A data set, we further plot the NER scores (averaged over the five trials for $K=14$) in the top plot in Figure \ref{fig:NERBasis_Syn}. We see that, under LPPA, all latent functions have nonzero NER, while for BaNPPA only a small number of latent functions have high NER score.
In the bottom plot in Figure \ref{fig:NERBasis_Syn}, we show the top four latent functions sorted according to the NER scores. For these plots, we used the best runs shown in Figure \ref{fig:Comparison}. We see that LPPA does not recover the true latent functions, while BaNPPA gives very similar results to the truth.


To visualize the responsibilities further, we plot the NER score and the normalized allocation matrix $\hat{\Theta}$ for the Mircoblog dataset in Figure \ref{fig:MatrixPlot}. We show results for LPPA and BaNPPA. We choose runs that obtained the best test-likelihood in Figure \ref{fig:Comparison}, and visualize 100 time-sequences sampled randomly.
We see that as expected LPPA uses all latent functions to explain the data, while BaNPPA assigns almost zero weights to latent functions with higher indices.
This further confirms that, even when a large number of latent functions are given, BaNPPA automatically selects only a few to explain the data, while LPPA might overfit. 

Finally, we further explore the impact of the volume constraint of Equation \eqref{Equ:Constraint} in BaNPPA.
We compare BaNPPA and BaNPPA-NC on the Synthetic B data set in Figure \ref{fig:UNERNERBasis_Syn}.
We use results for $K=14$ and $\alpha=8$.
In the top plot, we see that BaNPPA and BaNPPA-NC both give similar UNER scores, yet as shown in the bottom plot, BaNPPA-NC does not recover the true latent functions.
This result can be explained by looking at the expected volume $\mathbb{E}_q[\int_{\mathcal{T}}f_k^2(t)dt]$ shown in the middle plot. For BaNPPA, the volumes of all latent functions are equal, while, for BaNPPA-NC, the latent functions with higher UNER scores are assigned higher volume which eventually also get higher weights. This imbalance in the weights for some functions makes the results of BaNPPA-NC and BaNPPA different from each other.
This result clearly shows that the volume constraint in BaNPPA plays an important role to recover the true latent functions which is important for interpretability.

%
%

Overall, BaNPPA-NC performs similarly to BaNPPA when the latent structure is simple but becomes less favorable when the structure gets complicated. We give three additional synthetic data experiments  in Appendix C where the true $K$ is large.

\section{Conclusions and Future Work}
We proposed a model for time-sequence data, called BaNPPA, to automatically infer the number of latent functions. We combined BNP methods with the existing LPPA method, and showed that this combination might result in undentifiability. We solve this problem by imposing a volume constraint within variational inference. In the future, we will consider further investigating the reasons behind the identifiability problem. We will also investigate ways to combine the volume constraint and the Gaussian process prior. 

\section*{Acknowledgements}

The authors would like to thank Wenzhao Lian for sharing the code and thank Chris Lloyd for helpful discussions. MS acknowledges support by KAKENHI 17H00757.

\bibliography{aistats2018}

\begin{thebibliography}{}

\bibitem[Adams et~al., 2009]{adams2009tractable}
Adams, R.~P., Murray, I., and MacKay, D.~J. (2009).
\newblock Tractable nonparametric bayesian inference in poisson processes with
  gaussian process intensities.
\newblock In {\em Proceedings of the 26th Annual International Conference on
  Machine Learning}, pages 9--16. ACM.

\bibitem[Bauer et~al., 2016]{bauer2016understanding}
Bauer, M., van~der Wilk, M., and Rasmussen, C.~E. (2016).
\newblock Understanding probabilistic sparse gaussian process approximations.
\newblock In {\em Advances in Neural Information Processing Systems}, pages
  1533--1541.

\bibitem[Bertsekas, 2014]{bertsekas2014constrained}
Bertsekas, D.~P. (2014).
\newblock {\em Constrained optimization and Lagrange multiplier methods}.
\newblock Academic press.

\bibitem[Blei et~al., 2006]{blei2006variational}
Blei, D.~M., Jordan, M.~I., et~al. (2006).
\newblock Variational inference for dirichlet process mixtures.
\newblock {\em Bayesian analysis}, 1(1):121--143.

\bibitem[Ding and Wu, 2015]{ding2015predicting}
Ding, H. and Wu, J. (2015).
\newblock Predicting retweet scale using log-normal distribution.
\newblock In {\em Multimedia Big Data (BigMM), 2015 IEEE International
  Conference on}, pages 56--63. IEEE.

\bibitem[Gao et~al., 2015]{gao2015modeling}
Gao, S., Ma, J., and Chen, Z. (2015).
\newblock Modeling and predicting retweeting dynamics on microblogging
  platforms.
\newblock In {\em Proceedings of the Eighth ACM International Conference on Web
  Search and Data Mining}, pages 107--116. ACM.

\bibitem[Gopalan et~al., 2014]{gopalan2014bayesian}
Gopalan, P., Ruiz, F.~J., Ranganath, R., and Blei, D. (2014).
\newblock Bayesian nonparametric poisson factorization for recommendation
  systems.
\newblock In {\em Artificial Intelligence and Statistics}, pages 275--283.

\bibitem[Gunter et~al., 2014]{gunter2014efficient}
Gunter, T., Lloyd, C., Osborne, M.~A., and Roberts, S.~J. (2014).
\newblock Efficient bayesian nonparametric modelling of structured point
  processes.
\newblock In {\em Proceedings of the Thirtieth Conference on Uncertainty in
  Artificial Intelligence}, pages 310--319. AUAI Press.

\bibitem[Hjort et~al., 2010]{hjort2010bayesian}
Hjort, N.~L., Holmes, C., M{\"u}ller, P., and Walker, S.~G. (2010).
\newblock {\em Bayesian nonparametrics}, volume~28.
\newblock Cambridge University Press.

\bibitem[Ihler and Smyth, 2007]{ihler2007learning}
Ihler, A.~T. and Smyth, P. (2007).
\newblock Learning time-intensity profiles of human activity using
  non-parametric bayesian models.
\newblock In {\em Advances in Neural Information Processing Systems}, pages
  625--632.

\bibitem[Kingman, 1993]{kingman1993poisson}
Kingman, J. F.~C. (1993).
\newblock {\em Poisson processes}.
\newblock Wiley Online Library.

\bibitem[Kottas, 2006]{kottas2006dirichlet}
Kottas, A. (2006).
\newblock Dirichlet process mixtures of beta distributions, with applications
  to density and intensity estimation.
\newblock In {\em Workshop on Learning with Nonparametric Bayesian Methods,
  23rd International Conference on Machine Learning (ICML)}.

\bibitem[Lian et~al., 2015]{lian2015multitask}
Lian, W., Henao, R., Rao, V., Lucas, J., and Carin, L. (2015).
\newblock A multitask point process predictive model.
\newblock In {\em Proceedings of the 32nd International Conference on Machine
  Learning (ICML-15)}, pages 2030--2038.

\bibitem[Liu and Brown, 2003]{liu2003criminal}
Liu, H. and Brown, D.~E. (2003).
\newblock Criminal incident prediction using a point-pattern-based density
  model.
\newblock {\em International journal of forecasting}, 19(4):603--622.

\bibitem[Lloyd et~al., 2015]{lloyd2015variational}
Lloyd, C., Gunter, T., Osborne, M., and Roberts, S. (2015).
\newblock Variational inference for gaussian process modulated poisson
  processes.
\newblock In {\em International Conference on Machine Learning}, pages
  1814--1822.

\bibitem[Lloyd et~al., 2016]{lloyd2016latent}
Lloyd, C., Gunter, T., Osborne, M., Roberts, S., and Nickson, T. (2016).
\newblock Latent point process allocation.
\newblock In {\em Artificial Intelligence and Statistics}, pages 389--397.

\bibitem[Miller et~al., 2014]{miller2014factorized}
Miller, A., Bornn, L., Adams, R., and Goldsberry, K. (2014).
\newblock Factorized point process intensities: A spatial analysis of
  professional basketball.
\newblock In {\em International Conference on Machine Learning}, pages
  235--243.

\bibitem[Paisley, 2010]{paisley2010two}
Paisley, J. (2010).
\newblock Two useful bounds for variational inference.
\newblock Technical report, Technical report, Department of Computer Science,
  Princeton University, Princeton, NJ.

\bibitem[Pitman et~al., 2015]{pitman2015size}
Pitman, J., Tran, N.~M., et~al. (2015).
\newblock Size-biased permutation of a finite sequence with independent and
  identically distributed terms.
\newblock {\em Bernoulli}, 21(4):2484--2512.

\bibitem[Schmidt et~al., 2009]{schmidt2009optimizing}
Schmidt, M., Berg, E., Friedlander, M., and Murphy, K. (2009).
\newblock Optimizing costly functions with simple constraints: A limited-memory
  projected quasi-newton algorithm.
\newblock In {\em Artificial Intelligence and Statistics}, pages 456--463.

\bibitem[Teh et~al., 2005]{teh2005sharing}
Teh, Y.~W., Jordan, M.~I., Beal, M.~J., and Blei, D.~M. (2005).
\newblock Sharing clusters among related groups: Hierarchical dirichlet
  processes.
\newblock In {\em Advances in neural information processing systems}, pages
  1385--1392.

\bibitem[Titsias, 2009]{titsias2009variational}
Titsias, M.~K. (2009).
\newblock Variational learning of inducing variables in sparse gaussian
  processes.
\newblock In {\em AISTATS}, volume~5, pages 567--574.

\end{thebibliography}

\newpage
\appendix
\section{Evidence Lower Bound $\mathcal{L}_1(q)$}
Using Jensen's inequality, we bound the marginal log likelihood of the observed sequence $\{\bm{y}_d\}$. Hereafter we omit hyper-parameters $a_0,b_0,\alpha,\bm{H}$ in $\ln p(Y;a_0,b_0,\alpha,\bm{H})$ for simplicity.
\begin{align}
\ln p(Y)
&= \ln\Big[\int \Big(\prod_{d=1}^{D}p(\bm{y}_d|\theta_d,s_d,\bm{f})p(s_d)p(\theta_d')\Big)\nonumber\\
&\times\prod_{k=1}^{\infty}p(\bm{f}_{k,N}|\bm{f}_{k,M})p(\bm{f}_{k,M}) d\bm{\theta_d'} d\bm{f}\Big]\nonumber\\
&\geq \sum_{d=1}^{D}\mathbb{E}\ln p(\bm{y}_d|\theta_d,s_d,\bm{f})+\sum_{d=1}^{D}\sum_{k=1}^{K-1}\mathbb{E}\ln p(\theta_{dk}')\nonumber\\
&+\sum_{d=1}^{D}\mathbb{E}\ln p(s_d)+\sum_{k=1}^{K}\mathbb{E}\ln p(\bm{f}_{k,M})\nonumber\\
&-\sum_{d=1}^{D}\sum_{k=1}^{K-1}\mathbb{E}\ln q(\theta_{dk}') -\sum_{k=1}^{K}\mathbb{E}\ln q(\bm{f}_{k,M})\stackrel{\Delta}{=}\mathcal{L}_0(q).
\label{Equ:ELBO}
\end{align}
First we introduce a lemma \cite{paisley2010two}. 

\begin{lemma}{\cite{paisley2010two}}
	Let $\{X_k\}_{k=1}^K$ be a set of positive random variables, then
	\begin{equation}
	\mathbb{E}\ln\Big(\sum_{k=1}^{K}X_k\Big)\geq \ln\Big(\sum_{k=1}^{K}\exp(\mathbb{E}\ln X_k)\Big).
	\end{equation}
	or equivalently if $X_k = \exp(Y_k)$ where $Y_k$ is a random variable, then
	\begin{equation}
	\mathbb{E}\ln\Big(\sum_{k=1}^{K}\exp(Y_k)\Big)\geq \ln\Big(\sum_{k=1}^{K}\exp(\mathbb{E}Y_k)\Big).
	\end{equation}
	\begin{proof}
		The function $\ln(\cdot)$ is concave. Using an auxiliary probability vector, $(p_1,\ldots,p_K)$, where $p_k>0$ and $\sum_{k=1}^{K}p_k=1$, it follows from Jensen’s inequality that
		\begin{equation}
		\mathbb{E}\ln\Big(\sum_{k=1}^{K}X_k\Big)=\mathbb{E}\ln\Big(\sum_{k=1}^{K}p_k\frac{X_k}{p_k}\Big)\geq\sum_{k=1}^{K}p_k\mathbb{E}\ln\Big(\frac{X_k}{p_k}\Big)
		\end{equation}
		Taking derivatives with respect to $\{p_k\}$, we have
		\begin{equation}
		p_k = \frac{\exp(\mathbb{E}\ln X_k)}{\sum_{v=1}^{K}\exp(\mathbb{E}\ln X_v)}
		\end{equation}
		Inserting this back, we obtain the desired bound.
	\end{proof}
	\label{lem:1}
\end{lemma}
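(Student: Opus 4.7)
The plan is to exploit the concavity of $\ln(\cdot)$ via a variational reformulation of the sum, and then tighten the resulting bound by optimizing over the auxiliary parameters. Concretely, I will introduce an arbitrary probability vector $(p_1,\ldots,p_K)$ with $p_k>0$ and $\sum_k p_k = 1$, and write
\[
\sum_{k=1}^{K} X_k \;=\; \sum_{k=1}^{K} p_k \cdot \frac{X_k}{p_k},
\]
so that the sum inside the logarithm becomes an expectation with respect to the categorical distribution $(p_k)$. Since $\ln$ is concave, Jensen's inequality gives
\[
\ln\!\Big(\sum_{k=1}^{K} p_k \tfrac{X_k}{p_k}\Big) \;\geq\; \sum_{k=1}^{K} p_k \ln\!\Big(\tfrac{X_k}{p_k}\Big),
\]
and taking expectations on both sides preserves the inequality and yields
\[
\mathbb{E}\ln\!\Big(\sum_{k=1}^{K} X_k\Big) \;\geq\; \sum_{k=1}^{K} p_k \,\mathbb{E}\ln X_k \;-\; \sum_{k=1}^{K} p_k \ln p_k.
\]

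The next step is to maximize the right-hand side over the simplex, which gives the tightest bound attainable by this family of inequalities. Setting up the Lagrangian with multiplier $\nu$ for the constraint $\sum_k p_k = 1$ and differentiating in $p_k$ gives $\mathbb{E}\ln X_k - \ln p_k - 1 - \nu = 0$, so the optimum is
\[
p_k^{\star} \;=\; \frac{\exp(\mathbb{E}\ln X_k)}{\sum_{v=1}^{K} \exp(\mathbb{E}\ln X_v)}.
\]
Substituting $p_k^{\star}$ back into the lower bound, the entropy term and the linear term collapse to $\ln\big(\sum_{v} \exp(\mathbb{E}\ln X_v)\big)$, establishing the first form of the lemma. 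The second form is immediate by the substitution $X_k = \exp(Y_k)$, since $\mathbb{E}\ln X_k = \mathbb{E} Y_k$.

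I do not anticipate a serious obstacle: the only subtlety is checking that the positivity of $X_k$ guarantees $\mathbb{E}\ln X_k$ is well defined (possibly $-\infty$, in which case the corresponding term contributes $0$ to the optimized bound via $\exp(-\infty)=0$, consistent with the limit $p_k^{\star}\to 0$), and that the exchange of expectation and the Jensen bound is valid because the inequality holds pointwise in the underlying sample space before taking $\mathbb{E}$. Both are routine to verify, so the proof reduces to the variational manipulation outlined above.
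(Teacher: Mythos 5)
Your proposal is correct and follows essentially the same route as the paper's proof: introduce an auxiliary probability vector, apply Jensen's inequality to $\ln\big(\sum_k p_k X_k/p_k\big)$, and optimize over the simplex to obtain $p_k^\star \propto \exp(\mathbb{E}\ln X_k)$, which collapses the bound to the stated form. Your added remarks on the Lagrangian derivation and the degenerate case $\mathbb{E}\ln X_k = -\infty$ are fine but do not change the argument.
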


Using Lemma \ref{lem:1}, we could further bound the first term to allow for a practical variational inference. This result is the same as the one obtained by following the methodology in LPPA \cite{lloyd2016latent}.
\begin{align}
&\mathbb{E}\ln p(\bm{y}_d|\theta_d,s_d,\bm{f})\nonumber\\
&=\sum_{n=1}^{N_d}\Big(\ln\eta_d+\mathbb{E}\ln\sum_{k=1}^{\infty}\exp(\ln \theta_{dk}+\ln f_k^2(t))\Big)\nonumber\\
&-\eta_d\int_{\mathcal{T}}\mathbb{E}\sum_{k=1}^{\infty}\theta_{dk}f_k^2(s)ds\\
\geq&\sum_{n=1}^{N_d}\Big(\ln \eta_d+\ln\sum_{k=1}^{\infty}\exp(\mathbb{E}\ln \theta_{dk}+\mathbb{E}\ln f_k^2(t))\Big)\nonumber\\
&-\eta_d\int_{\mathcal{T}}\mathbb{E}\sum_{k=1}^{\infty}\theta_{dk}f_k^2(s)ds.\label{Equ:Collapse}
\end{align}
Using Equation \eqref{Equ:Collapse}, we implicitly collapse the indicator variables and obtain a lower bound of ELBO:
\begin{align}
\mathcal{L}_1(q)&\stackrel{\Delta}{=}\sum_{n=1}^{N_d}\Big(\ln \eta_d+\ln\sum_{k=1}^{\infty}\exp(\mathbb{E}\ln \theta_{dk}+\mathbb{E}\ln f_k^2(t))\Big)\nonumber\\
&-\eta_d\int_{\mathcal{T}}\mathbb{E}\sum_{k=1}^{\infty}\theta_{dk}f_k^2(s)ds+\sum_{d=1}^{D}\sum_{k=1}^{K-1}\mathbb{E}\ln 
\frac{p(\theta_{dk}')}{q(\theta_{dk}')}\nonumber \\
& +\sum_{d=1}^{D}\mathbb{E}\ln p(s_d)+\sum_{k=1}^{K}\mathbb{E}\ln \frac{p(\bm{f}_{k,M})}{q(\bm{f}_{k,M})}.
\label{Equ:ELBOalter}
\end{align}
Now $q(f_{k,N})=\mathcal{N}(\tilde{u}_k,\tilde{B}_k)$, where 
\begin{align*}
\tilde{u}_k &= \kappa_{k,NM}\kappa_{k,MM}^{-1}\mu_k,\\
\tilde{B}_k &=\kappa_{k,NN}-\kappa_{k,NM}\kappa_{k,MM}^{-1}\kappa_{k,MN}\\
&+\kappa_{k,NM}\kappa_{k,MM}^{-1}\Sigma_k\kappa_{k,MM}^{-1}\kappa_{k,MN}
\end{align*}
And the expectation parts in Equation \eqref{Equ:ELBOalter} can be computed as:
\begin{align}
&\mathbb{E}\ln p(\theta_{dk}')=\ln \alpha+(\alpha-1)\mathbb{E}[\ln(1-\theta_{dk}')],\\
&\mathbb{E}\ln q(\theta_{dk}')=\ln\frac{\Gamma(\tau_{dk,0}+\tau_{dk,1})}{\Gamma(\tau_{dk,0})\Gamma(\tau_{dk,1})}\nonumber\\
&\quad+(\tau_{dk,1}-1)\mathbb{E}[\ln(1-\theta_{dk}')]+(\tau_{dk,0}-1)\mathbb{E}[\ln\theta_{dk}'],\\
&\mathbb{E}\ln p(s_d) = a_0\ln b_0-\ln \Gamma(a_0)+(a_0-1)\ln \eta_d-b_0\eta_d,\\
&\mathbb{E}\ln \frac{p(\bm{f}_{k,M})}{q(\bm{f}_{k,M})}=\frac{1}{2}\ln\frac{|\Sigma_k|}{|\kappa_{k,MM}|}+\frac{m}{2}\nonumber\\
&\quad-\frac{1}{2}tr\Big(\kappa_{k,MM}^{-1}(\Sigma_k+(\mu_k-g)(\mu_k-g)^T)\Big),\\
&\mathbb{E}[\ln f^2_k(t_n^d)]=-G(-\frac{\tilde{u}_{k,n}^2}{2\tilde{B}_{k,nn}})-C+\ln(\frac{\tilde{B}_{k,nn}}{2}), \\
&\int_{\mathcal{T}}\mathbb{E}[f_k^2(s)]ds=\gamma|\mathcal{T}|-tr(\kappa_{k,MM}^{-1}\Psi_k)\nonumber\\
&\quad+tr(\kappa_{k,MM}^{-1}\Psi_k \kappa_{k,MM}^{-1}(\Sigma_k+\mu_k\mu_k^T)),
\end{align}
$G(x),x\leq 0$ is calculated by a precomputed multi-resolution look-up table. $C$ is a constant and $\Psi_k\in\mathbb{R}^{M\times M}, \Psi_{k,ij}=\int_{\mathcal{T}} \kappa_k(t_i,x)\kappa_k(x,t_j)dx$. $\Psi_k$ is determined by the kernel hyper-parameter in $\kappa_k$ and the region $\mathcal{T}$.

The expectation with regard to beta distribution is:
\begin{align*}
\mathbb{E}[\ln(1-\theta_{dk}')]&=\psi(\tau_{dk,1})-\psi(\tau_{dk,0}+\tau_{dk,1}),\\
\mathbb{E}[\ln(\theta_{dk}')]&=\psi(\tau_{dk,0})-\psi(\tau_{dk,0}+\tau_{dk,1}).
\end{align*}

After adding augmented Lagrangian penalty function, the modified evidence lower bound is:
\begin{align}
L_{\bm{v_i}}(\Phi,\bm{w_i}) &\stackrel{\Delta}{=} \mathcal{L}_1(q)
-\sum_{k=1}^{K}w_{ik}\Big(\int_\mathcal{T}\mathbb{E}_q[f_k^2(s)]ds-A\Big)\nonumber\\
&-\sum_{k=1}^{K}\frac{v_{ik}}{2}\Big(\int_\mathcal{T}\mathbb{E}_q[f_k^2(s)]ds-A\Big)^2.
\label{Equ:FinalBound}
\end{align} 

\subsection{Details of Derivatives}
Based on the modified evidence lower bound in Equation \eqref{Equ:FinalBound}, we could derive the parameter learning method.

\begin{itemize}
	\item $\eta_d$. We list the term related to $\eta_d$ in Equation \eqref{Equ:FinalBound} first.
	\begin{align*}
	L_{\eta_d}&\stackrel{\Delta}{=}N_d\ln\eta_d-\eta_d\int_{\mathcal{T}}\sum_{k=1}^{K}\mathbb{E}\Big(\theta_{dk}f_k^2(s)\Big)ds\\
	&-\eta_db_0+(a_0-1)\ln\eta_d.
	\end{align*}
	Obviously, there is a closed form update for $\eta_d$
	\begin{equation*}
	\eta_d = \frac{N_d+a_0-1}{b_0+\int_{\mathcal{T}}\sum_{k=1}^{K}\mathbb{E}\Big(\theta_{dk}f_k^2(s)\Big)ds}.
	\end{equation*}
	\item $\tau_{dk,0},\tau_{dk,1}$. We list the term related to these parameters in Equation \eqref{Equ:FinalBound} first
	\begin{align*}
	L_{\tau_{dk}}&\stackrel{\Delta}{=}\sum_{n=1}^{N_d}\Big[\ln\sum_{k=1}^{\infty}\exp\Big(\mathbb{E}_q[\ln \theta_{dk}]\\
	&-\mathbb{E}_q[\ln f^2_k(t_n^d)]\Big)\Big]-\eta_d\int_{\mathcal{T}}\mathbb{E}\sum_{k=1}^{\infty}\theta_{dk}f_k^2(s)ds\\
	&+\Big(\ln\frac{\Gamma(\tau_{dk,0})\Gamma(\tau_{dk,1})}{\Gamma(\tau_{dk,0}+\tau_{dk,1})}
	-(\tau_{dk,0}-1)\mathbb{E}\ln\theta_{dk}'\\
	&+(\alpha-\tau_{dk,1})\mathbb{E}\ln(1-\theta_{dk}')\Big).\\
	\end{align*}
	Let
	\begin{align*}
	L_{dnk}&\stackrel{\Delta}{=}\exp\Big(\mathbb{E}_q[\ln \theta_{dk}]+\mathbb{E}_q[\ln f^2_k(t_n^d)]\Big)\\
	&=\exp\Big(\psi(\tau_{dk,0})+\sum_{l=1}^{k-1}\psi(\tau_{dl,1})\\
	&-\sum_{l=1}^{k}\psi(\tau_{dl,0}+\tau_{dl,1})+\mathbb{E}_q[\ln f^2_k(t_n^d)]\Big),\\
	V_k&\stackrel{\Delta}{=}\int_\mathcal{T}\mathbb{E}f^2_k(s)ds
	\end{align*}
	There is no closed form update for these variables, we use coordinate ascent method.
	\begin{align*}
	\frac{\partial L_{\tau_{dk}}}{\partial \tau_{dk,0}}&=-\eta_d\Big(V_k\frac{\partial[\theta_{dk}]}{\partial \tau_{dk,0}}+\sum_{l=k+1}^{K}V_l\frac{\partial[\theta_{dl}]}{\partial \tau_{dk,0}}\Big)\\
	&-\Big(\tau_{dk,0}-1-\sum_{n=1}^{N_d}\frac{L_{dnk}}{\sum_{v=1}^{K}L_{dnv}}\Big)\psi'(\tau_{dk,0})\\
	&+\Big(\tau_{dk,0}-1+\tau_{dk,1}-\alpha-\sum_{n=1}^{N_d}\frac{\sum_{v=k}^{K}L_{dnv}}{\sum_{v=1}^{K}L_{dnv}}\Big)\\
	&\times\psi'(\tau_{dk,0}+\tau_{dk,1}),\\
	\frac{\partial L_{\tau_{dk}}}{\partial \tau_{dk,1}}&=-\eta_d\Big(V_k\frac{\partial[\theta_{dk}]}{\partial \tau_{dk,1}}+\sum_{l=k+1}^{K}V_l\frac{\partial[\theta_{dl}]}{\partial \tau_{dk,1}}\Big)\\
	&-\Big(\tau_{dk,1}-\alpha-\sum_{n=1}^{N_d}\frac{\sum_{v=k+1}^{K}L_{dnv}}{\sum_{v=1}^{K}L_{dnv}}\Big)\psi'(\tau_{dk,1})\\
	&+\Big(\tau_{dk,0}-1+\tau_{dk,1}-\alpha-\sum_{n=1}^{N_d}\frac{\sum_{v=k}^{K}L_{dnv}}{\sum_{v=1}^{K}L_{dnv}}\Big)\\
	&\times\psi'(\tau_{dk,0}+\tau_{dk,1}).
	\end{align*}
	where we have
	\begin{align*}
	&\frac{\partial[\theta_{dk}]}{\partial \tau_{dk,0}}=\frac{\tau_{dk,1}}{(\tau_{dk,0}+\tau_{dk,1})^2}\prod_{l=1}^{k-1}\frac{\tau_{dl,1}}{\tau_{dl,0}+\tau_{dl,1}},\\
	&\frac{\partial[\theta_{dk}]}{\partial \tau_{dk,1}}=-\frac{\tau_{dk,0}}{(\tau_{dk,0}+\tau_{dk,1})^2}\prod_{l=1}^{k-1}\frac{\tau_{dl,1}}{\tau_{dl,0}+\tau_{dl,1}},\\
	&\frac{\partial[\theta_{dl}]}{\partial \tau_{dk,0}}=-\frac{\tau_{dl,0}}{\tau_{dl,0}+\tau_{dl,1}}\frac{\tau_{dk,1}}{(\tau_{dk,0}+\tau_{dk,1})^2}\\
	&\times\prod_{v=1,v\neq k}^{l-1}\frac{\tau_{dv,1}}{\tau_{dv,0}+\tau_{dv,1}},\\
	&\frac{\partial[\theta_{dl}]}{\partial \tau_{dk,1}}=\frac{\tau_{dl,0}}{\tau_{dl,0}+\tau_{dl,1}}\frac{\tau_{dk,0}}{(\tau_{dk,0}+\tau_{dk,1})^2}\\
	&\times\prod_{v=1,v\neq k}^{l-1}\frac{\tau_{dv,1}}{\tau_{dv,0}+\tau_{dv,1}}.
	\end{align*}
	\item $\{\Sigma_k,\mu_k\}$. Take $\mu_k$ for an example.
	\begin{align*}
	\frac{\partial L_{\phi_k}}{\partial \mu_k}&=\sum_{d=1}^{D}\Big(\sum_{n=1}^{N_d}\frac{1}{\sum_{v=1}^{K}L_{dnv}}\frac{\partial L_{dnk}}{\partial \mu_k}\Big)\\
	&-\Big(w_{ik}+v_{ik}(V_k-A)+\sum_{d=1}^{D}\eta_d\mathbb{E}[\theta_{dk}]\Big)\frac{\partial V_k}{\partial \mu_k}\\
	&+\frac{\partial}{\partial \mu_k}\Big[\frac{1}{2}\ln|\Sigma_k|-\frac{1}{2}\ln|\kappa_{k,MM}|\\
	&-\frac{1}{2}tr\Big(\kappa_{k,MM}^{-1}(\Sigma_k+(\mu_k-g)(\mu_k-g)^T)\Big)\Big].
	\end{align*}
\end{itemize}

\textbf{Hyper-parameter part}: We could update the hyper-parameters in a similar way.
\begin{itemize}
	\item Gaussian process hyper-parameters 
	$\kappa_{k,MM},\sigma$. Similar to that in $\{\Sigma_k,\mu_k\}$.
	\item Beta distribution prior $\alpha$.
	\begin{align*}
	L_{\alpha}&\stackrel{\Delta}{=}D(K-1)\ln \alpha+(\alpha-1) \sum_{d=1}^{D}\sum_{k=1}^{K-1}(\psi(\tau_{dk,1})\\
	&-\psi(\tau_{dk,0}+\tau_{dk,1})).
	\end{align*}
	Then we have a closed form update for $\alpha$.
	\begin{equation}
	\alpha=\frac{D(K-1)}{\sum_{d=1}^{D}\sum_{k=1}^{K-1}\Big(\psi(\tau_{dk,1}+\tau_{dk,0})-\psi(\tau_{dk,1})\Big)}\label{Equ:Alpha}.
	\end{equation}
\end{itemize}

\subsection{Proof of Upper Bound}
\begin{theorem}
	Each optimization problem is upper bounded.
	\begin{equation*}
	L_{\bm{v_i}}(\Phi,\bm{w_i})\leq \ln p(Y)+\sum_{k=1}^{K}\frac{w_{ik}^2}{2v_{ik}},\quad i\in\mathbb{N}^+.
	\end{equation*}
	\label{Theorem 1}
\end{theorem}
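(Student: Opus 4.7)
The plan is to split $L_{\bm{v_i}}(\Phi,\bm{w_i})$ into two independent pieces, bound each one separately, and then add the bounds. Recall from Equation \eqref{Equ:FinalBound} that
\begin{equation*}
L_{\bm{v_i}}(\Phi,\bm{w_i}) = \mathcal{L}_1(q) + \sum_{k=1}^{K}\Bigl(-w_{ik} h_k(\Phi) - \tfrac{v_{ik}}{2} h_k^2(\Phi)\Bigr),
\end{equation*}
so the ELBO part is decoupled from the $K$ quadratic penalty terms, and each term only depends on $\Phi$ through the scalar $h_k(\Phi)$.

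First, I would handle the ELBO part: by construction, $\mathcal{L}_1(q)$ is a variational lower bound on $\ln p(Y)$ derived via Jensen's inequality (this is exactly what is established in the derivation leading to Equation \eqref{Equ:ELBOalter}), hence $\mathcal{L}_1(q) \le \ln p(Y)$ regardless of the choice of variational parameters in $\Phi$.

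Next, I would bound each penalty term by treating $h_k \in \mathbb{R}$ as a free variable and maximizing the concave quadratic $q_k(h) := -w_{ik} h - \tfrac{v_{ik}}{2} h^2$. Since $v_{ik} > 0$, this quadratic is strictly concave, and setting $q_k'(h) = -w_{ik} - v_{ik} h = 0$ gives the unique maximizer $h^\star = -w_{ik}/v_{ik}$ with value $q_k(h^\star) = \tfrac{w_{ik}^2}{2 v_{ik}}$. Therefore $-w_{ik} h_k(\Phi) - \tfrac{v_{ik}}{2} h_k^2(\Phi) \le \tfrac{w_{ik}^2}{2 v_{ik}}$ pointwise in $\Phi$.

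Combining the two bounds termwise over $k$ and adding yields the claim $L_{\bm{v_i}}(\Phi,\bm{w_i}) \le \ln p(Y) + \sum_{k=1}^K \tfrac{w_{ik}^2}{2 v_{ik}}$ for every admissible $\Phi$. There is essentially no hard step here: the only mild subtlety is that the bound on the penalty must hold uniformly in $\Phi$ (which is automatic because $h^\star$ is unconstrained and $h_k(\Phi)$ can take any real value), and one should note explicitly that $v_{ik} > 0$ is required for the penalty quadratic to be bounded above, which is guaranteed by the assumption that $\{v_{ik}\}$ is a positive increasing sequence with initial value $v_{1k} = 4$.
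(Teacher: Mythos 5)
Your proposal is correct and follows essentially the same route as the paper: bound $\mathcal{L}_1(q)\le\ln p(Y)$ by the standard variational argument, and bound each penalty term by completing the square in $h_k$, whose concave maximum is $w_{ik}^2/(2v_{ik})$. In fact your version states the quadratic bound with the correct sign (the paper's intermediate inequality should read $\sum_k (w_{ik}h_{ik}+\tfrac{v_{ik}}{2}h_{ik}^2)\ge -\sum_k \tfrac{w_{ik}^2}{2v_{ik}}$); your only slight overstatement is that $h_k(\Phi)$ "can take any real value" --- it is bounded below by $-A$ --- but this is harmless since maximizing over all of $\mathbb{R}$ still yields a valid upper bound.
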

\begin{proof}
	$\mathcal{L}_1(q)$ can be easily bounded by variational inference framework
	\begin{equation*}
	\mathcal{L}_1(q)\leq \ln p(Y)
	\end{equation*}
	Let $h_{ik}=\int_\mathcal{T}\mathbb{E}_q[f_k^2(s)]ds-A$, and then we have
	\begin{align*}
	&\sum_{k=1}^{K}w_{ik}\Big(\int_\mathcal{T}\mathbb{E}_q[f_k^2(s)]ds-A\Big)\\
	&+\sum_{k=1}^{K}\frac{v_{ik}}{2}\Big(\int_\mathcal{T}\mathbb{E}_q[f_k^2(s)]ds-A\Big)^2\\
	=&\sum_{k=1}^{K}(w_{ik}h_{ik}+\frac{v_{ik}}{2}h_{ik}^2)\geq \sum_{k=1}^{K}\frac{w_{ik}^2}{2v_{ik}}
	\end{align*}
	Combining these two parts finishes the proof.
\end{proof}

\subsection{A Bias When Using Lemma \ref{lem:1}}
Although the bound in Lemma \ref{lem:1} is rather tight, it can still add a bias which may lead to the over-shrinking phenomenon in the model. We illustrate the bias through the following simple model. 
\begin{equation*}
Y_1=X_1^2,~Y_2=X_2^2,~X_1\sim \mathcal{N}(2,1),~X_2\sim \mathcal{N}(2,4),
\end{equation*}
where $\mathcal{N}(\cdot)$ is the normal distribution. Using Lemma \ref{lem:1}, we can arrive the following inequality:
\begin{align}
\mathcal{L}_{left}&\stackrel{\Delta}{=}\mathbb{E}_{p(Y_{1:2})}\ln\Big(wY_1+(1-w)Y_2\Big)\\\nonumber
&\geq \ln\Big(w\exp(\mathbb{E}\ln Y_1)+(1-w)\exp(\mathbb{E}\ln Y_2)\Big)\\
& \stackrel{\Delta}{=}\mathcal{L}_{right},~w\in[0,1]\nonumber.
\end{align}

We vary the value of $w$ and plot $\mathcal{L}_{right}$ and $\mathcal{L}_{left}$. The result is given in Figure \ref{Fig:LowerBias}. We can see that the for $\mathcal{L}_{right}$ the optimal is $w=1$ while for $\mathcal{L}_{left}$ the optimal is obviously a mixture of two components. This is because the logarithm function will punish values which are closer to zero harder. Since $Y_2$ has a large variance, there will be a large proportion of samples near zero which makes the corresponding $\mathbb{E}\ln Y_2$ smaller and less favorable. This bias in the inequality may account for the shrinkage in LPPA and BaNPPA. 

\begin{figure}[ht]
	\includegraphics[width=\columnwidth]{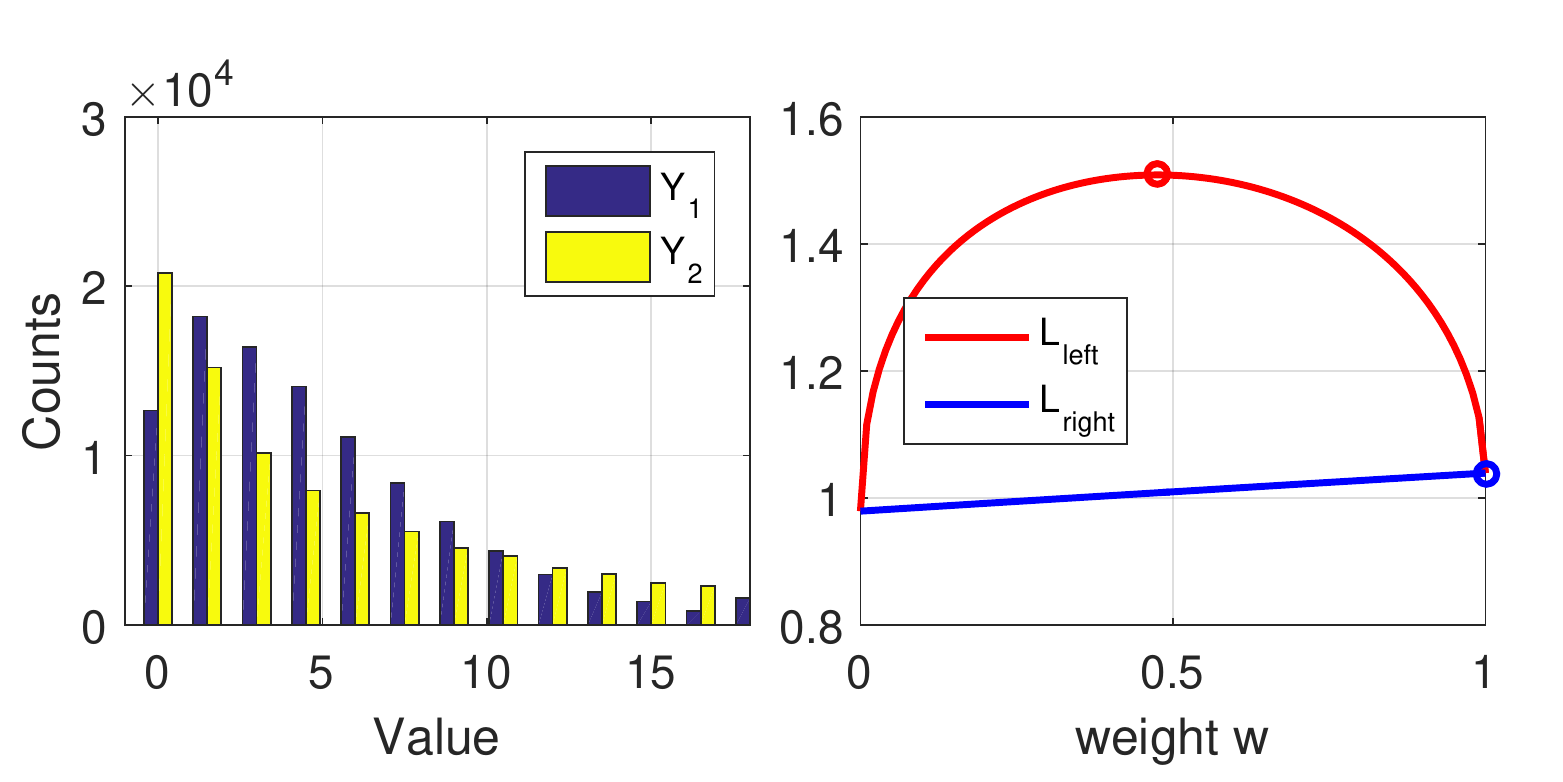}
	\caption{Bias in the inference with lower bound. Left: The histogram of $Y_1$ and $Y_2$. Right: $\mathcal{L}_{left}$ (Blue) versus $\mathcal{L}_{right}$ (Red) and the round marker indicates the maximum of the curve.}
	\label{Fig:LowerBias}
\end{figure}

\section{Test Likelihood}
In LPPA, the allocation matrix $\Theta$ is treated as hyper-parameters and all the parameters are $\{\bm{\mu},\bm{\Sigma},\bm{H},\Theta\}$. Let $\Phi = \{\bm{H},\Theta\}$. In variational inference we use the variational distribution $q(\bm{f};\Phi)$ to approximate the posterior $p(\bm{f}|Y_{train};\Phi)$. The test likelihood can be lower-bounded as follows.
\begin{align}
&\ln p(Y_{test}|Y_{train};\Phi) = \ln \int p(Y_{test}|\bm{f};\Phi)p(\bm{f}|Y_{train};\Phi) d\bm{f} \nonumber\\
& \approx \ln \int p(Y_{test}|\bm{f};\Phi)q(\bm{f};\Phi) d\bm{f} \nonumber\\
& \geq \int q(\bm{f};\Phi)\ln\frac{p(Y_{test}|\bm{f};\Phi)q(\bm{f};\Phi)}{q(\bm{f};\Phi)} d\bm{f} \nonumber\\
& = \mathbb{E}_q \ln p(Y_{test}|\bm{f};\Phi)\nonumber\\
&\geq \sum_{d=1}^{D}\sum_{n=1}^{N_d^{\mathrm{test}}}\ln \sum_{k=1}^{K}\theta_{dk}\exp\Big[\mathbb{E}_q (\ln f_k^2(t_n^d))\Big]\nonumber\\
&-\sum_{d=1}^{D}\sum_{k=1}^{K}\theta_{dk}\int_\mathcal{T}\mathbb{E}_q[f_k^2(s)]ds\stackrel{\Delta}{=}\mathcal{L}_{test}.
\label{Equ: Test}
\end{align} 
In BaNPPA, all the parameters to be optimized are $\{\bm{\eta},\bm{\tau},\bm{\mu},\bm{\Sigma},\bm{H},a_0,b_0,\alpha\}$. Let $\Phi = \{\bm{H},a_0,b_0,\alpha\}$. However, if we follow the same deduction as LPPA, we will not arrive at a fair comparison since the inequality in Equation \eqref{Equ: Test} is different in principle for LPPA and BaNPPA, and therefore, we draw $L$ samples from variational distribution $q(\bm{s},\theta_d;a_0,b_0,\alpha)$ for $\bm{s},\theta_d$ and then follow the lower bound in Equation \eqref{Equ: Test}.
\begin{align}
&\mathbb{E}_q \ln p(Y_{test}|\bm{s},\bm{\Theta},\bm{f};\Phi) \nonumber\\
&= \int q(\bm{s},\bm{\Theta},\bm{f};\Phi) \ln p(Y_{test}|\bm{s},\bm{\Theta},\bm{f};\Phi) d\bm{s}d\bm{\Theta}d\bm{f}\nonumber\\
& \approx \frac{1}{\tilde{L}}\sum_{l=1}^{\tilde{L}}\int q(\bm{f};H) \ln p(Y_{test}|\bm{s}_l,\bm{\Theta}_l,\bm{f};H)d\bm{f}\nonumber\\
& \geq \frac{1}{\tilde{L}}\sum_{l=1}^{\tilde{L}}\Big(\sum_{d=1}^{D}\sum_{n=1}^{N_d^{\mathrm{test}}}\ln\Big(s_{l,d}\sum_{k=1}^{K}\theta_{l,dk}\exp\Big[\mathbb{E}_q (\ln f_k^2(t_n^d))\Big]\Big)\nonumber\\
&-\sum_{d=1}^{D}s_{l,d}\sum_{k=1}^{K}\theta_{l,dk}\int_\mathcal{T}\mathbb{E}_q[f_k^2(s)]ds\Big)\label{Equ:TestBaNPPA}.
\end{align}

\begin{figure*}[ht]
	\centering
	\includegraphics[width=\textwidth]{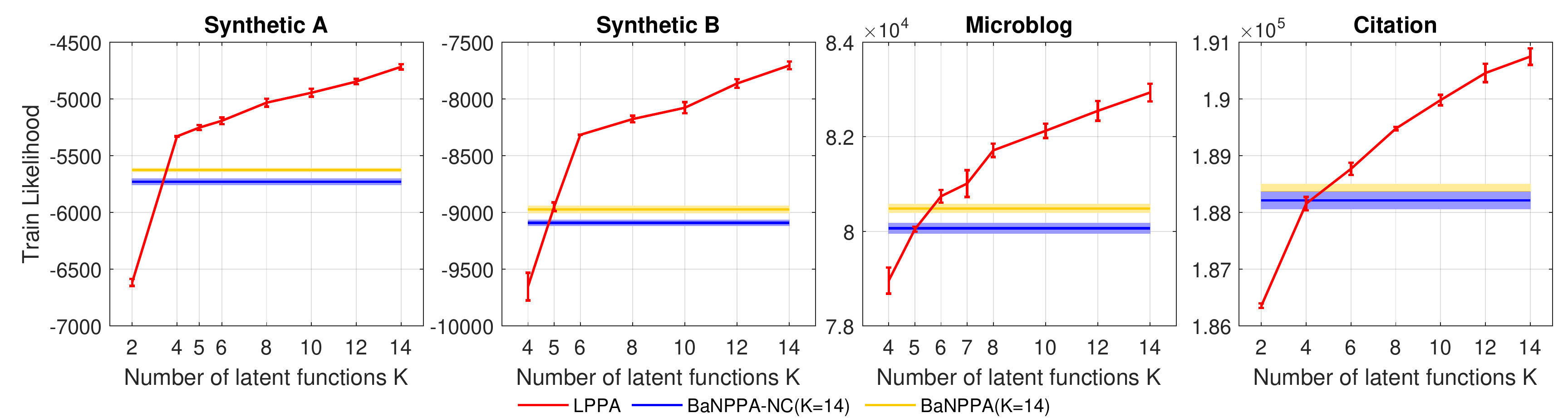}	
	\caption{The comparison of the train likelihood for three algorithms. For LPPA, we change the number of latent functions $K$. For BaNPPA/BaNPPA-NC, we fix $K=14$ and optimize the hyper-parameter $\alpha$ using the variational expectation-maximization. Error bars and shaded area represent the 95\% confidence intervals.}\label{fig:Comparison_Train}
\end{figure*}

\section{Additional Experiment Results}
\subsection{Details of the Data Sets}
\begin{itemize}
	\item {\textbf{Synthetic dataset}}. \\
	A) In $\lambda_d(t)=s_d\sum_{k=1}^{4}\theta_{dk}\tilde{f}(t;\psi_k),~t\in[0,60]$. 
	\begin{align*}
	s_d&\sim \mathrm{Gamma}(2,3),\\
	\theta_d&\sim \mathrm{Dirichlet}(1.2,1,0.8,0.6),\\
	\tilde{f}(t;\psi_k)&\propto\exp\Big(-\frac{(t-15+10k)^2}{10}\Big)\\
	&+\exp\Big(-\frac{(t-55+10k)^2}{10}\Big).
	\end{align*} 
	Each $\tilde{f}(t;\psi_k)$ is either a Gaussian distribution or a mixture of two Gaussian distributions normalized by its integral. 
	
	B) In $\lambda_d(t)=s_d\sum_{k=1}^{6}\theta_{dk}f_k(t)$. 
	\begin{align*}
	s_d&\sim \mathrm{Gamma}(2,3),\\
	\theta_d&\sim \mathrm{Dirichlet}(1.2,1,0.8,0.6,0.5,0.5),\\
	\tilde{f}(t;\psi_k)&\propto\exp\Big(-\frac{(t-15+10k)^2}{10}\Big)\\
	&+\exp\Big(-\frac{(t-75+10k)^2}{10}\Big).
	\end{align*} 
	Each $\tilde{f}(t;\psi_k)$ is either a Gaussian distribution or a mixture of two Gaussian distributions normalized by its integral. We use the rejection sampling method for the inhomogeneous Poisson process to generate the time sequences.
	\item {\textbf{citation dataset}}. Two examples with different citation patterns are given in Figure \ref{fig:EgCitation}.
	\begin{figure}[ht]
		\begin{center}
			\includegraphics[width=0.9\columnwidth]{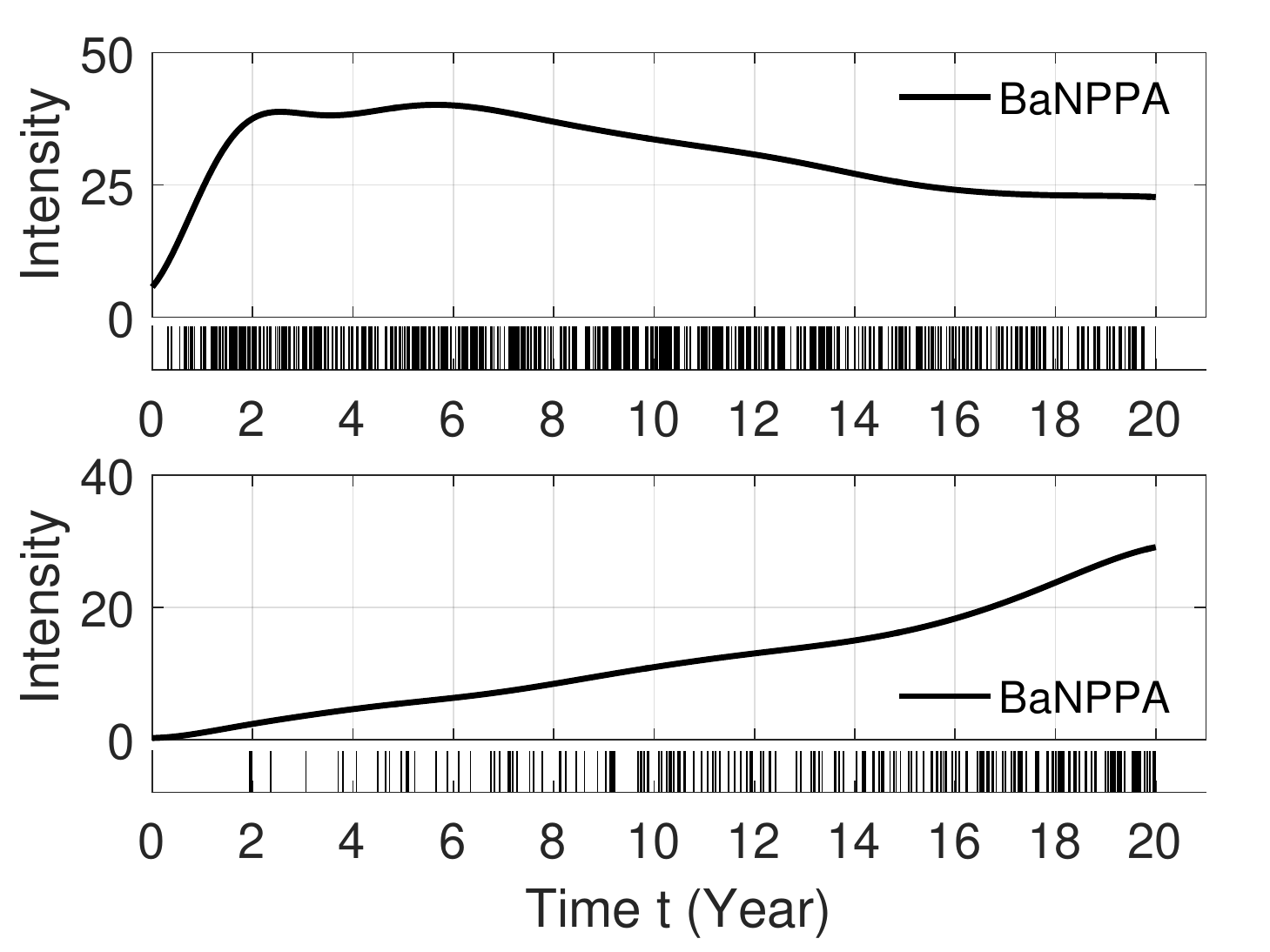}
			\caption{\textbf{Citation data set}. Top: A paper which slowly gets citation and becomes popular many years later.  Bottom: A paper which quickly gets citation after being published. Smooth lines are the mean intensity function inferred from LPPA and BaNPPA. Small bars is the time of each citation. The x-axis indicates the time in year after publication.}\label{fig:EgCitation}
		\end{center}
	\end{figure}
\end{itemize}

\begin{figure*}[ht]
	\centering
	\includegraphics[width=\textwidth]{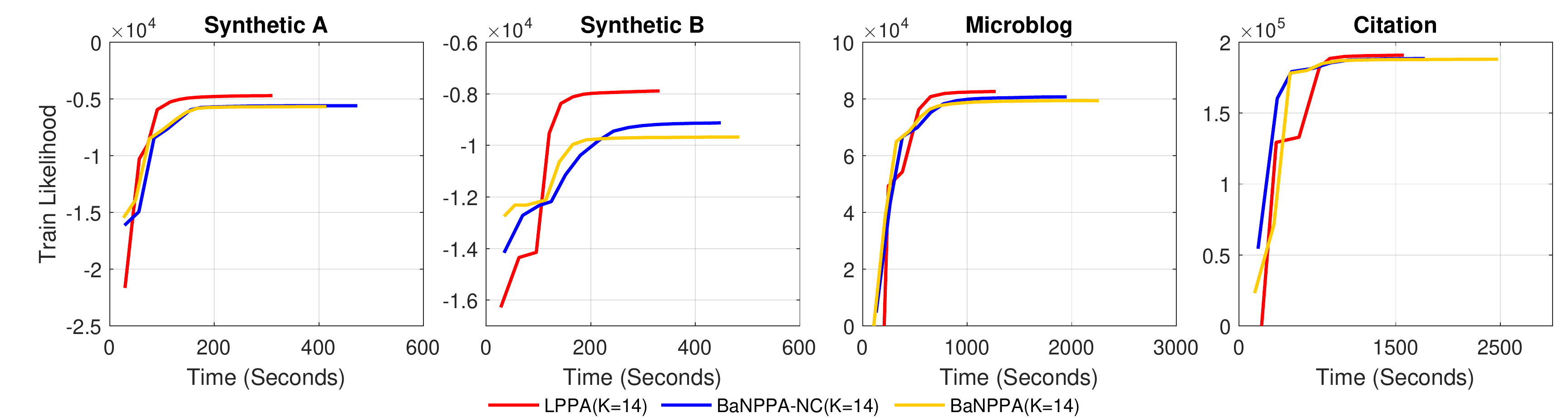}
	\caption{The comparison of the training likelihood versus time for four data sets (K=14) when optimizing the hyper-parameter $\alpha$. The result of one trial is shown.}\label{fig:TrainTime}
\end{figure*}
\begin{figure*}[ht]
	\centering
	\includegraphics[width=\textwidth]{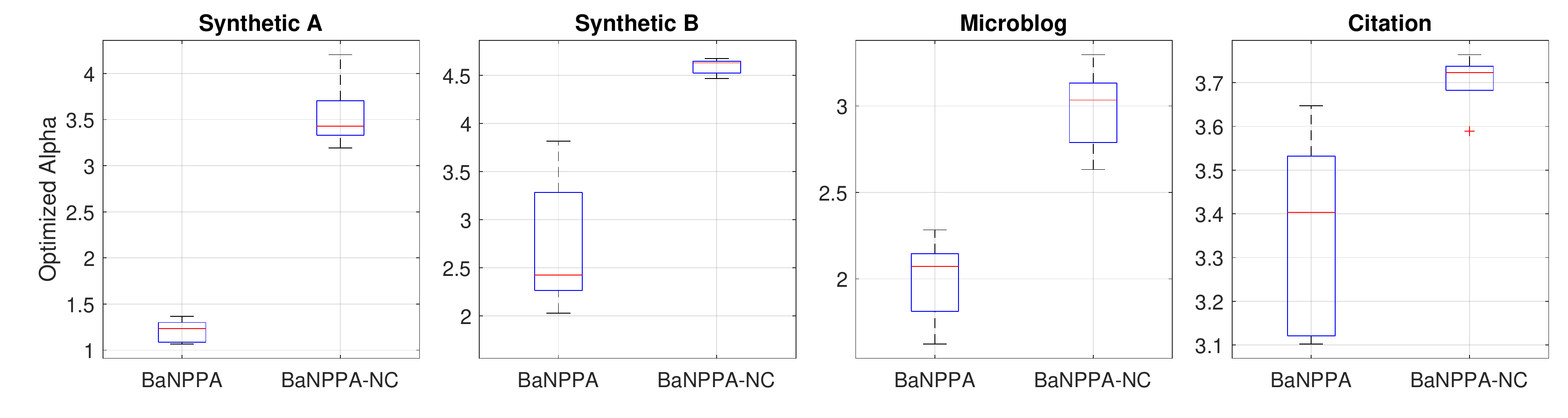}
	\caption{The comparison of the optimized $\alpha$ for four data sets (K=14) when optimizing the hyper-parameter $\alpha$ .}\label{fig:Alpha}
\end{figure*}
\begin{figure*}[ht]
	\centering
	\includegraphics[width=\textwidth]{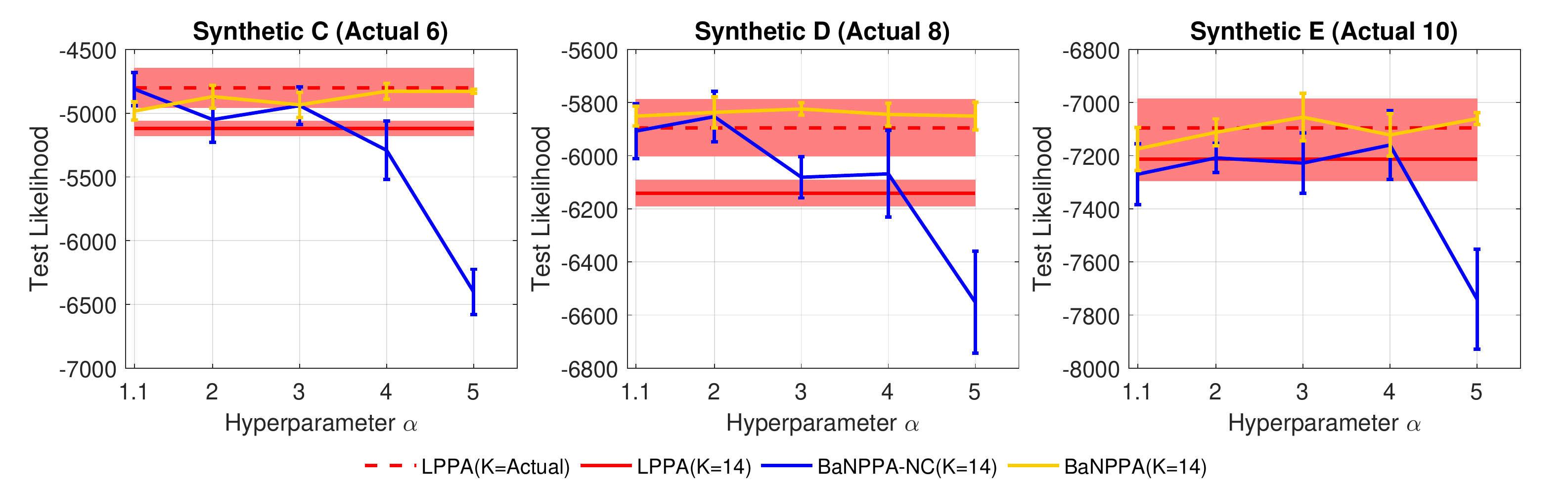}
	\caption{The comparison of the test likelihood for three additional data sets (K=14) when fixing the hyper-parameter $\alpha=[1.1,2,4,6,8]$. Error bars and shaded area represent the 95\% confidence intervals.}
	\label{fig:Comparison_AlphaVary2}
\end{figure*}

\subsection{The Comparison of the Train Likelihood}
The comparison of the train likelihood $\mathcal{L}_{train}$ is given in Figure \ref{fig:Comparison_Train}. We can notice that for LPPA, the train likelihood keeps increasing when we increase $K$. This is also a sign of over-fitting.

\subsection{Computation Time}
We plot the change of the training likelihood in one trial in Figure \ref{fig:TrainTime}. For total computational complexity, both BaNPPA-NC and BaNPPA take more computation time but are still comparable to LPPA. Two reasons account for this fact. One is that there are more parameters to be optimized in BaNPPA and BaNPPA-NC and the other is that BaNPPA potentially has an infinite number of problems to be solved. In Figure \ref{fig:TrainTime}, we can notice that the training likelihood for BaNPPA and the training likelihood for BaNPPA-NC stabilize rather quickly. This is because we use Equation \eqref{Equ:TestBaNPPA} to calculate the likelihood and there are no divergence terms in it.

\subsection{Synthetic Data Sets with a Relatively Large K}
We add three more synthetic data set with a larger K.
\begin{enumerate}
	\item[C)] We sample 200 sequences from $\lambda_d(t)=s_d\sum_{k=1}^{6}\theta_{dk}\tilde{f}(t;\psi_k)$, where $s_d,~\theta_d$ are drawn from Dirichlet distribution and Gamma distribution.
	\begin{align*}
	s_d&\sim\mathrm{Gamma}(2,3),\\
	\bm{\theta}_{d}&\sim\mathrm{Dir}(0.8,0.4,0.2,0.2,0.2,0.2).
	\end{align*}
	We use $\tilde{f}(t;\psi_k)=\exp(-(t-15+10k)^2/10)$, $k=1,\ldots,6$, $t\in[0,60]$ as basis intensity functions. 
	\item[D)] We sample 200 sequences from $\lambda_d(t)=s_d\sum_{k=1}^{8}\tilde{f}(t;\psi_k)$, where $s_d,~\theta_d$ are drawn from Dirichlet distribution and Gamma distribution.
	\begin{align*}
	s_d&\sim\mathrm{Gamma}(2,3),\\
	\bm{\theta}_{d}&\sim\mathrm{Dir}(0.8,0.4,0.4,0.2,0.2,0.2,0.1,0.1).
	\end{align*}
	We use $\tilde{f}(t;\psi_k)\propto\exp(-(t-15+10k)^2/10)$, $k=1,\ldots,8$, $t\in[0,80]$ as basis intensity functions. 
	\item[E)] We sample 200 sequences from $\lambda_d(t)=s_d\sum_{k=1}^{10}\tilde{f}(t;\psi_k)$, where $s_d,~\theta_d$ are drawn from Dirichlet distribution and Gamma distribution.
	\begin{align*}
	s_d&\sim\mathrm{Gamma}(2,3),\\
	\bm{\theta}_{d}&\sim\mathrm{Dir}(0.8,0.6,0.4,0.4,0.4,0.2,0.2,0.2,0.1,0.1).
	\end{align*}
	We use $\tilde{f}(t;\psi_k)\propto\exp(-(t-15+10k)^2/10)$, $k=1,\ldots,10$, $t\in[0,100]$ as basis intensity functions. 
\end{enumerate}
In the experiment, we fix the hyper-parameter $a_0$ and $b_0$ and the length-scale hyper-parameters in all $\kappa_{k,MM}$ to be 4.3081 (Close to the half of the span of $\tilde{f}(t;\psi_k)$). This means we only optimize the mixture weights and the variational distribution $q(m,S)$ for Gaussian processes.

We vary the hyper-parameter $\alpha=[1.1,2,3,4,5]$. The result is given in Figure. We can see that BaNPPA-NC tends to over-shrink the components even when $\alpha=5$ and gets a worse result.

\end{document}